\documentclass[10pt]{article} % For LaTeX2e
\usepackage[accepted]{tmlr}
\usepackage{amsmath,amsfonts,bm}

\def\eqref#1{equation~\ref{#1}}
\def\1{\bm{1}}

\DeclareMathAlphabet{\mathsfit}{\encodingdefault}{\sfdefault}{m}{sl}
\SetMathAlphabet{\mathsfit}{bold}{\encodingdefault}{\sfdefault}{bx}{n}

\usepackage{hyperref}
\usepackage{url}
\usepackage{algorithm}
\usepackage{algorithmic}
\usepackage{siunitx}

\let\AND\relax

\usepackage{amsmath}
\usepackage{amsthm}
\usepackage{amsfonts}
\usepackage{graphicx} 
\newtheorem{thm}{Theorem}
\newtheorem{lem}[thm]{Proposition}

\title{On the Dynamics \& Transferability of Latent Generalization during Memorization}

\author{\name Simran Ketha \email p20200021@hyderabad.bits-pilani.ac.in \\
      \addr Anuradha \& Prashanth Palakurthi Centre for Artificial Intelligence
      Research, \\ Department of Computer Science \& Information Systems,\\
      Birla Institute of Technology \& Science Pilani, Hyderabad 500078, India. \\
      \\
      \AND
      \name Venkatakrishnan Ramaswamy \email venkat@hyderabad.bits-pilani.ac.in \\
       \addr Anuradha \& Prashanth Palakurthi Centre for Artificial Intelligence
      Research, \\ Department of Computer Science \& Information Systems,\\
      Birla Institute of Technology \& Science Pilani, Hyderabad 500078, India.
}

\def\month{03}  % Insert correct month for camera-ready version
\def\year{2026} % Insert correct year for camera-ready version
\def\openreview{\url{https://openreview.net/forum?id=t024Zm0tKF}} % Insert correct link to OpenReview for camera-ready version

\begin{document}

\maketitle

\begin{abstract}
Deep networks have been known to have extraordinary generalization abilities, via mechanisms that aren't yet well understood. It is also known that upon shuffling labels in the training data to varying degrees, deep networks, trained with standard methods, can still achieve perfect or high accuracy on this corrupted training data. This phenomenon is called {\em memorization}, and typically comes at the cost of poorer generalization to true labels. Our recent work has demonstrated, surprisingly, that the internal representations of such models retain significantly better latent generalization abilities than is directly apparent from the model. In  particular, it has been shown that such latent generalization can be recovered via simple probes (called MASC probes) on the layer-wise representations of the model. However, the origin and dynamics over training of this latent generalization during memorization is not well understood. Here, we track the training dynamics, empirically, and find that latent generalization abilities largely peak early in training, with model generalization. 
%However, while model generalization degrades steeply over training thereafter, latent generalization falls more modestly \& plateaus at a higher level over epochs of training. 
% These experiments lend circumstantial evidence to the hypothesis that latent generalization uses largely similar mechanisms as those that underlie the model's generalization in the early phases of training. 
Next, we investigate to what extent the specific nature of the MASC probe is critical for our ability to extract latent generalization from the model's layerwise outputs. To this end, we first examine the mathematical structure of the MASC probe and show that it is a quadratic classifier, i.e. is non-linear. This brings up the question of the extent to which this latent generalization might be linearly decodable from layerwise outputs.
%The question thus becomes whether the quadratic nature of the MASC probe underlies its remarkable effectiveness in extracting latent generalization. 
%If this were so, a linear probe constructed along these lines would not be as effective. 
To investigate this, we designed a new linear probe for this setting.  We find cases where this linear probe outperforms MASC, and other cases, where the opposite happens, notably for many instances of ResNet-18 trained on CIFAR-10. Next, we consider the question of whether it is possible to transfer latent generalization to model generalization by directly editing model weights.
% On the one hand, this question is conceptually important because it establishes conclusively that the latent generalization manifested by the probe is also within reach of the model, with exactly the information that the model was provided during training, namely the corrupted training data. On the other hand -- and more pragmatically -- it also suggests the possibility of "repairing" a trained model that has memorized, without requiring expensive retraining from scratch. 
To this end, we devise a way to transfer the latent generalization present in last-layer representations to the model using the new linear probe. This immediately endows such models with improved generalization in many cases, i.e. without additional training. We also explore training dynamics, when the aforementioned weight editing is done midway during training. 
% Finally, we use the linear probe for intervention at a specific epoch, which, in many cases, turn out to be memorization-resistant. 
Our findings provide a more detailed account of the rich dynamics of latent generalization during memorization, provide clarifying explication on the specific role of the probe in latent generalization, as well as demonstrate the means to leverage this understanding to directly transfer this generalization to the model. Our code is available at: \url{https://github.com/simranketha/Dynamics_during_training_DNN}.

\end{abstract}

\section{Introduction}
\label{introduction}

% reviewer comment: Motivation: the problem of learning with corrupted labels is presented "as-is", with no motivation, no discussion of real-life scenarios and the error regime that characterizes them, adversarial poisoning etc.

Overparameterized deep neural networks have seen widespread deployment in many fields, due to their remarkable generalization abilities. However, we still don't have a clear understanding of the mechanisms underlying their ability to generalize so well to unseen data. It has also been shown \citep{zhang2017understanding,zhang2021understanding} that overparameterized deep networks are capable of achieving high or even perfect training accuracy on datasets, wherein a subset of training data have their labels randomly shuffled. Such models typically have poor generalization performance, i.e. poorer accuracy on test data with correct labels -- a phenomenon that has been called {\em memorization}. It is known \citep{arpit2017closer} that during training, models trained with such corrupted datasets exhibit better generalization during the initial phases of training; however generalization progressively deteriorates as training accuracy improves subsequently. 

Label noise is also a practical issue in Deep Learning. The success of Deep Learning has depended on large, carefully annotated datasets, which are costly to obtain.
To reduce labeling effort, non-expert sources such as Amazon Mechanical Turk and web-derived annotations are often used, but they frequently introduce unreliable labels \citep{paolacci2010running, cothey2004web, mason2012conducting, scott2013classification}. Labeling can also be difficult even for domain experts \citep{frenay2013classification, rv2004observer}, and may be intentionally corrupted through adversarial attacks \citep{xiao2012adversarial}. Such corrupted annotations, referred to as noisy labels, have been estimated to constitute 8.0\%–38.5\% of real-world datasets \citep{xiao2015learning, li2017webvision, lee2018cleannet, song2019selfie} and are known to degrade performance more severely than other noise types, such as input noise \citep{zhu2004class}. Consequently, a better understanding of learning under label noise and the representations that drive it, not only has foundational relevance, but could also have downstream implications for our ability to design more effective techniques for learning in the presence of label noise.

% Overparameterized deep neural networks have seen widespread deployment in many fields, due to their remarkable generalization abilities. However, we still don't have a clear understanding of the mechanisms underlying their ability to generalize so well to unseen data. It has also been shown \citep{zhang2017understanding,zhang2021understanding} that overparameterized deep networks are capable of achieving high or even perfect training accuracy on datasets, wherein a subset of training data have their labels randomly shuffled. Such models typically have poor generalization performance, i.e. poorer accuracy on test data with correct labels -- a phenomenon that has been called {\em memorization}. It is known \citep{arpit2017closer} that during training, models trained with such corrupted datasets exhibit better generalization during the initial phases of training; however generalization progressively deteriorates as training accuracy improves subsequently. 

Our recent study \citep{ketha2025decoding} has shown that while deep networks trained on datasets having corrupted labels tend to exhibit poor generalization, their intermediate-layer representations retain a surprising degree of latent generalization ability. This ability can be recovered from such trained networks by using a simple probe -- the Minimum Angle Subspace Classifier (MASC) -- that utilizes the subspace geometry of the corrupted training dataset representations, to this end. Our findings suggest that generalizable features are present in the layer-wise representations of such networks, even when the model fails to utilize them sufficiently. However, the origin and evolution of this latent generalization ability during training are not well understood. In particular, it is unclear whether model generalization and latent generalization arise from the same underlying mechanisms, and whether the specific nature of the probe (MASC) is critical for extracting latent generalization from layer-wise representations. Moreover, it is not even clear whether this latent generalization can be directly transferred to the model. That is, whether it is, in principle, possible to directly modify model weights, so it overtly acquires this latent generalization performance, is not known. On the one hand, this question is conceptually important because it establishes conclusively that the latent generalization manifested by the probe is also within reach of the model, with exactly the information that the model was provided during training, namely the corrupted training data. On the other hand -- and more pragmatically -- it also suggests the possibility of "repairing" a trained model that has memorized, without requiring expensive retraining from scratch. 
% More generally, it has not been known why models even memorize corrupted training data. That is, can  if the inductive bias manifested by the structure of deep networks and standard training methods, is, in principle, sufficient to resist memorization in favor of generalization, e.g. by simply choosing an appropriate model initialization. Indeed, existing techniques that resist memorization in the label noise setting typically either use regularization (e.g. \citet{arpit2017closer, liu2020early}) or altogether different training paradigms (e.g. \citet{jiang2018mentornet, han2018co}), to this end.
Here, we address these questions.

Our main contributions are listed below. 
\begin{itemize}

\item We empirically characterize the evolution of latent generalization\footnote{We provide precise operational definitions of latent generalization in Subsection \ref{Preliminaries}}, as manifested by MASC, over epochs of training, in comparison to model generalization over training.

%\item  To investigate the origin and training dynamics of latent generalization, we analyze models trained on standard datasets with varying levels of label corruption and use MASC \citep{ketha2025decoding} to characterize how their latent generalization ability evolves over the course of training.

% For models trained on standard datasets with various degrees of label corruption, we characterize the evolution of the latent ability to generalize over training using MASC \citep{ketha2025decoding}. 
    %We find that the evolution of the MASC accuracy follows, qualitatively, that of the model early on in training.  We find that as the model exhibits a peak in its test accuracy early in training \citep{arpit2017closer}, the MASC test accuracy at all layers also tends to largely peak concurrently with that of the model, albeit at different levels. Following this, the evolution of test accuracies between the model \& MASC diverge, with the model showing a marked decline in test accuracies over further epochs of training. In contrast, the MASC test accuracies decline more modestly \& tend to plateau higher, which manifests in the improved generalization ability at the end of training, as reported in \citep{ketha2025decoding}.
    
    \item We observe that MASC is a non-linear classifier.
    %; and in particular, prove that it is a quadratic classifier. 
    This brings up the question of the extent to which the improved generalization performance of the probe (i.e. MASC) is attributable to the effectiveness of the non-linear nature of the probe itself that may not easily be linearly decodable. To address this point, we introduce a simple linear alternative  -- Vector Linear Probe Intermediate-layer Classifier (VeLPIC). On the one hand, we find cases where VeLPIC achieves superior latent generalization performance in comparison to MASC, with this happening especially for higher degrees of label corruption. On the other hand, there are cases, e.g. with ResNet-18\footnote{ResNet-18 is the most modern network tested here and, in particular, for the best performing layers, MASC has significantly better accuracy than VeLPIC. Furthermore, there are multiple early layers where MASC on that layer outperforms VeLPIC on the same layer for many degrees of label corruption. However, even with ResNet-18, we find that, for later layers, VeLPIC outperforms MASC later in training, when there is high label noise.}, where the best-performing layers with MASC outperform those of VeLPIC. 

    \item As a baseline, we analyze  latent generalization during training using a linear probe (logistic regression) trained by minimizing a crossentropy loss \& compare its performance with MASC \& VeLPIC.
    
   \item By leveraging the linear probe (VeLPIC), we devise a way to directly edit the pre-softmax weights of such deep networks, that immediately transfers to the model, the latent generalization performance of VeLPIC (as applied to the last layer). 
   %Notably, this is without requiring additional training. This demonstrates that latent generalization present in layerwise representations can be transferred directly to enhance the model generalization of deep network models, in the memorization regime.

    \item We study the dynamics of model generalization and linear probe behavior (VeLPIC) during training by introducing a targeted weight intervention at a specific training epoch. 
    %In particular, we replace the pre-softmax layer weights with VeLPIC vectors at a chosen epoch and then continue training using the standard optimization procedure. 

    % \item Using the linear probe (VeLPIC), we propose an initialization strategy for deep networks. When used with standard training (without any explicit regularization), we find that this initialization steers the models away from memorization and towards improved generalization, in many cases. 
\end{itemize}

\section{Related work}

% reviewer comment :W5: Related literature: The paper does not engage with numerous fields and approaches that could shed much light on latent generalization. To name a few - PAC-Bayes theory could contribute to the discussion of linear vs quadratic classifiers, as well as clarify issues pertaining to sharp minima (for instance, does the proposed repair mechanism "move" the model into flatter, more robust areas in hypothesis space?). Work in gradient dynamics and implicit bias (e.g. [1], [2]) surely is relevant to the learning dynamics the paper explores. The Neural Tangent Kernel framework ([3] seems like a good fit for the authors' kernel-based approach. Double descent ([4]) has findings that similarly challenge the "naive narrative" of classical ML. Learning with noisy labels (see [5] for a survey) cannot be ignored in this context. I want to stress here that I'm not merely complaining about the paper missing this or that favorite reference - ideally the paper would explore potential connections with some of the sub-fields above (or others) much more vigorously.

% 1. \citep{soudry2018implicit} : The implicit bias of gradient descent on separable data
% 2. \citep{gunasekar2018characterizing}:Characterizing implicit bias in terms of optimization geometry
% 3. \citep{jacot2018neural}: Neural tangent kernel: Convergence and generalization in neural networks
% 4. \citep{belkin2019reconciling}:Reconciling modern machine-learning practice and the classical bias--variance trade-off
% 5. \citep{song2022learning}: Learning From Noisy Labels With Deep Neural Networks: A Survey

In influential work, \citep{zhang2017understanding,zhang2021understanding} showed that deep networks can achieve perfect training accuracy even with randomly shuffled labels, accompanied by poor generalization. In follow-up work, \citep{arpit2017closer} find that in the memorization regime, networks learn simple patterns first during training. Their work provides a detailed account of the early dynamics of training. More recently, our work \citep{ketha2025decoding} in fact show that in spite of the fall in model generalization later on in training, the layerwise representations of the model retain significant latent generalization ability. \citep{ketha2025adversarially} used probes from \citep{ketha2025decoding} to investigate the role of internal representations in adversarial setting.

Analyzing intermediate representations in deep networks has been previously explored using kernel-PCA \citep{montavon2011kernel} and linear classifier probes \citep{alain2018understanding}. 
% Notably, \citep{alain2018understanding} deliberately did not study memorized networks since they thought that such probes would inevitably overfit. 
Notably, \citep{alain2018understanding} state that they deliberately did not probe deep networks in the memorization setting since they thought that such probes would inevitably overfit.  On the contrary, \citep{ketha2025decoding} demonstrate that probes on deep networks in the memorization setting, can have enhanced generalization. 
\citep{stephensongeometry} show evidence suggesting that memorization occurs in the later layers. \cite{li2020gradient} show that in the memorization regime, gradient descent with early stopping is provably robust to label noise and that there is substantial deviation from initial weights after the early stopping point, which drives overfitting. 

Several training paradigms have been proposed to enhance generalization performance when learning from corrupted datasets. For example, MentorNet \citep{jiang2018mentornet} introduces a framework wherein a mentor network guides the learning process of a student network by guiding the student model to focus on likely clean labels. Likewise, Co‐Teaching \citep{han2018co} trained two peer networks simultaneously, each selecting small‐loss examples to update its counterpart.
Early-Learning Regularization (ELR) \citep{liu2020early} augmented the training objective with a regularization term, which implicitly prevents learning of incorrect labels.

Gradient descent on separable data has been shown to exhibit an implicit bias toward specific solutions, even in the absence of explicit regularization, helping to explain generalization in overparameterized models \citep{soudry2018implicit}. This view was further refined by characterizing implicit bias through optimization geometry, linking solution selection to the structure of the loss landscape \citep{gunasekar2018characterizing}. 

\citet{saxe2013exact} offer theoretical explanations on generalization for  deep linear networks and \citet{lampinen2018analytic} offer theoretical explanations in the memorization regime. Methodologies such as Canonical Correlation Analysis \citep{raghu2017svcca,morcos2018insights} and Centered Kernel Alignment \citep{kornblith2019similarity} have been used to characterize training dynamics and network similarity. Representational geometry and structural metrics provide further insights into learned representation properties \citep{chung2016linear,cohen2020separability,sussillo2009generating,farrell2019dynamic,bakry2015digging,cayco2019re,yosinski2014transferable}. The Neural Tangent Kernel (NTK) framework provides a kernel-based perspective on learning dynamics in infinitely wide networks, connecting neural network training to kernel regression and offering insights into generalization\citep{jacot2018neural}.

Recent advances have highlighted limitations of classical generalization theory. In particular, the double descent phenomenon challenges the traditional bias–variance trade-off by showing that model test risk can decrease even after achieving zero training error as model capacity grows, thereby extending the classical U-shaped risk curve \citep{belkin2019reconciling}. This insight has important implications for understanding generalization in high-capacity models that are often trained on large, imperfect datasets. In parallel, robust learning in the presence of label noise has emerged as a critical research direction, since mislabeled data can significantly impair generalization performance and is pervasive in real-world applications. The survey in \citep{song2022learning} systematically reviews existing approaches for learning under noisy labels and highlights the remaining challenges in achieving robustness to label corruption.

%PAC-Bayes theory provides generalization bounds that balance training error with model complexity, measured by the divergence between prior and posterior distributions \citep{mcallester1998some}. PAC-Bayes margin bounds have been used to explain why large-margin linear classifiers such as SVMs generalize well \citep{herbrich2000pac,herbrich2002pac}. Recent studies relate generalization to the geometry of the loss landscape, showing that flatter minima where the loss is less sensitive to parameter perturbations tend to improve robustness and generalization \citep{keskar2016large,dziugaite2017computing,haddouche2024pac}. This perspective suggests that mechanisms implicitly favoring flatter solutions may lead to better generalization.

\section{Preliminaries \& experimental setup}
\label{exp}

\subsection{Preliminaries}
\label{Preliminaries}

We study a $C$-class classification problem defined over an unknown data 
distribution $\mathcal{D}$ on $\mathcal{X} \times \mathcal{Y}$, where $\mathcal{X} \in \mathbb{R}^n$ and
$\mathcal{Y} \in \{1, \dots, C\}$. 
% Let $ T = \{(x_i, y_i)\}_{i=1}^n \sim \mathcal{D}^n $ be an i.i.d. training dataset drawn from $\mathcal{D}$. 

% For a corruption degree $p$, we construct a modified training 
% dataset $\widehat{T}_p$ by changing the labels uniform at random with probability $p$. 
% is this required?

Let $f_\theta : \mathcal{X} \to \mathbb{R}^C$ denote a model, i.e. a neural network. The corresponding classifier is defined as
\[
h_\theta(x) = \arg\max_{c \in \mathcal{Y}} f_\theta(x)_c
\]

We assume that the network admits a decomposition
\[
f_\theta = g_\theta \circ \phi_\theta
\]
where $\phi_\theta : \mathcal{X} \to \mathbb{R}^d$ denotes the map from the input to the latent 
representation at a chosen hidden layer, and 
$g_\theta : \mathbb{R}^d \to \mathbb{R}^C$ denotes the mapping performed by the remainder of the network.

% For an input $x \in \mathcal{X}$, we denote its latent representation by
% \[
% z = \phi_\theta(x) \in \mathbb{R}^d.
% \]

To study generalization properties inherent to the representation from 
$\phi_\theta$, we consider a  probe
$f^{\mathrm{lat}} : \mathbb{R}^d \to \mathcal{Y}$ that operates solely 
on latent representations and the corresponding latent classifier is 

\[
h^{\mathrm{lat}}(x) = \arg\max_{c \in \mathcal{Y}} f^{\mathrm{lat}}(x)_c
\]

In practice, let 
$T_{\mathrm{test}} = \{(x_i, y_i)\}_{i=1}^m$ 
denote a test dataset of $m$ samples, where $y_i\in \mathcal{Y}$ are the corresponding true labels. 
For each test input $x_i\in\mathcal{X}$, we define its latent representation as
\[
z_i = \phi_\theta(x_i) \in \mathbb{R}^d.
\]

The empirical latent generalization of the chosen hidden layer is the test accuracy of $h^{\mathrm{lat}}$ evaluated on the latent representations $z_i$ with respect to their true labels $y_i$.

% The empirical latent generalization is accuracy of $h^{\mathrm{lat}}$ for $z_i$ with true labels
\[
\widehat{\mathcal{A}}_{\mathrm{lat}}(\theta)
=
\frac{1}{m}
\sum_{i=1}^m
\mathbf{1}\{ h^{\mathrm{lat}}(z_i) = y_i \}.
\]

The empirical model generalization is the test accuracy of $f_\theta$ evaluated on $x_i$ with respect to their true labels $y_i$.

% The empirical latent generalization is accuracy of $h^{\mathrm{lat}}$ for $z_i$ with true labels
\[
\widehat{\mathcal{A}}(\theta)
=
\frac{1}{m}
\sum_{i=1}^m
\mathbf{1}\{ h_\theta(x_i) = y_i \}.
\]

Let $ T = \{(x_i, y_i)\}_{i=1}^t \sim \mathcal{D}^t $ be an i.i.d. training dataset drawn from $\mathcal{D}$. 
For a corruption degree\footnote{Please see Section \ref{expsetup} (2nd paragraph) for the definition of {\em corruption degree}.} $p$, we construct a modified training 
 dataset $\widehat{T}_p$ by changing the labels uniformly at random with probability $p$. 
$\widehat{T}_p$ is used to train $f_\theta$; we call such models memorized models\footnote{Following convention of prior work. The phenomenon of such models having poor empirical model generalization has been called {\em memorization} --- a convention we continue to follow here.}. The corresponding $z_i$'s and labels are used to construct probes on the layer in question. In this paper, we study the latent generalization of  specific classes of probes during model training, and examine their empirical latent generalization relative to the empirical model generalization of the models at the corresponding epoch of model training.

\subsection{Experimental setup}\label{expsetup}
We demonstrate results for the same set of models and datasets as presented in \citep{ketha2025decoding}. Specifically, we use Multi-Layer Perceptrons (MLPs) trained on the MNIST \citep{deng2012mnist} and CIFAR-10 \citep{Krizhevsky09learningmultiple} datasets; Convolutional Neural Networks (CNNs) trained on MNIST, Fashion-MNIST \citep{xiao2017/online}, and CIFAR-10; AlexNet \citep{krizhevsky2012imagenet} trained on Tiny ImageNet dataset \citep{tiny-imagenet} and ResNet-18 \citep{he2016deep} trained on CIFAR-10. Additional details on the model architectures and training are available in Section \ref{model_details}.

Each model was trained under two distinct schemes: (i) using training data with true labels, referred to as “generalized models,” and (ii) using training data with labels randomly shuffled to varying degrees (referred to as “memorized models” \cite{zhang2021understanding}. Similar to \cite{ketha2025decoding}, we train the aforementioned models using corruption degrees of 0\%, 20\% , 40\%, 60\%, 80\%, and 100\%. Training with a {\em corruption degree} $c$ implies that, with probability $c$, the label of a training datapoint is changed with a randomly selected label drawn uniformly from the set of possible classes. This may result in the label remaining the same after the change as well. All models were trained either until achieving high training accuracy (99\% or 100\%) or for a maximum of 500 epochs, whichever occurred first. 

To study the dynamics of the training process, we conducted the experiments on model checkpoints saved at various stages of training. Specifically, we began with the randomly initialized model (corresponding to epoch 0), followed by checkpoints saved at every second epoch up to the 20th epoch. Beyond epoch 20, results are shown at intervals of five epochs for the MLP, CNN and ResNet-18 models, and at intervals of ten epochs for the AlexNet model. The reported results are averaged over three independent training runs, with shaded regions in the plots indicating the range across instances.

\citet{ketha2025decoding} investigate the organization of class-conditional subspaces using the training data at various layers of deep networks. These subspaces are estimated via Principal Components Analysis (PCA), specifically, ensuring that they pass through the origin.
To probe the layerwise geometry without relying on subsequent layers, we propose a new probe -- the Minimum Angle Subspace Classifier (MASC). For a given test input, MASC projects the layer output onto each class-specific subspace, and computes the angles between the original and projected vectors, for each subspace.
% These subspaces  are estimated via Principal Components Analysis (PCA), specifically, ensuring they are pass through the origin. % To probe the layerwise geometry without relying on subsequent layers, the MASC is constructed. For a given test input, MASC projects its layer output onto each class-specific subspace, and the angles between the original and projected vectors are computed. 
The label predicted by MASC corresponds to the class whose subspace yields the projected vector with the smallest such angle. 
% We refer the reader to \citep{ketha2025decoding} for a more detailed treatment of MASC.
We provide a detailed summary of the working of MASC in Section \ref{MASC}. We have used 99\% as the percentage of variance explained by the principal components that form the class-specific subspaces used by MASC, similar to experiments conducted in \cite{ketha2025decoding}.

\section{Training dynamics of latent generalization using MASC}
% # abstract
 % 1) What is the origin and dynamics over training of this latent generalization? Specifically, is it the case that model generalization and latent generalization leverage largely the same underlying mechanisms? 
% To address (1), we track the training dynamics, empirically, and find that latent generalization abilities largely peak early in training, with model generalization, suggesting a common origin for both. However, while model generalization degrades steeply over training thereafter, latent generalization falls more modestly \& plateaus at a higher level over epochs of training. These experiments lend circumstantial evidence to the hypothesis that latent generalization uses largely similar mechanisms as those that underlie the model's generalization.

As shown in \cite{ketha2025decoding}, for most models trained with corrupted labels, there exists at least one layer where MASC exhibits better generalization  than the corresponding trained model. However, the origin \& evolution of this latent generalization across training isn't well understood. 

Here, we empirically study the behavior of latent generalization, as manifested by MASC, during training.  MASC testing accuracy during training for MLP trained on MNIST, MLP trained on CIFAR-10, CNN trained on MNIST, CNN trained on Fashion-MNIST, CNN trained on CIFAR-10, AlexNet trained on Tiny ImageNet and ResNet-18 trained on CIFAR-10 are shown in Figure~\ref{main_0.99}. Results with 0\% and 100\% corruption degrees are shown in Figure~\ref{main_0.99_0} in Section \ref{during_masc}.

% MASC testing accuracy during training for MLP trained on MNIST, CNN trained on Fashion-MNIST, AlexNet trained on Tiny ImageNet and ResNet-18 trained on CIFAR-10 with 0\% and 100\% corruption degrees are shown in Figure~\ref{main_0.99_0}. In the absence of label corruption, the MASC accuracy of the final fully connected layers (MLP-FC4 (2048 units) and CNN-FC3 (250 units)) closely matched the corresponding model test accuracies. Interestingly, in certain cases, such as AlexNet trained on Tiny ImageNet (FC1 and FC2, each with 4096 units) and MLP-CIFAR-10 (FC3 with 2048 units), the MASC accuracy even surpassed the model's test accuracy. 

\begin{figure*}[ht]
\centering
\includegraphics[width=\linewidth]{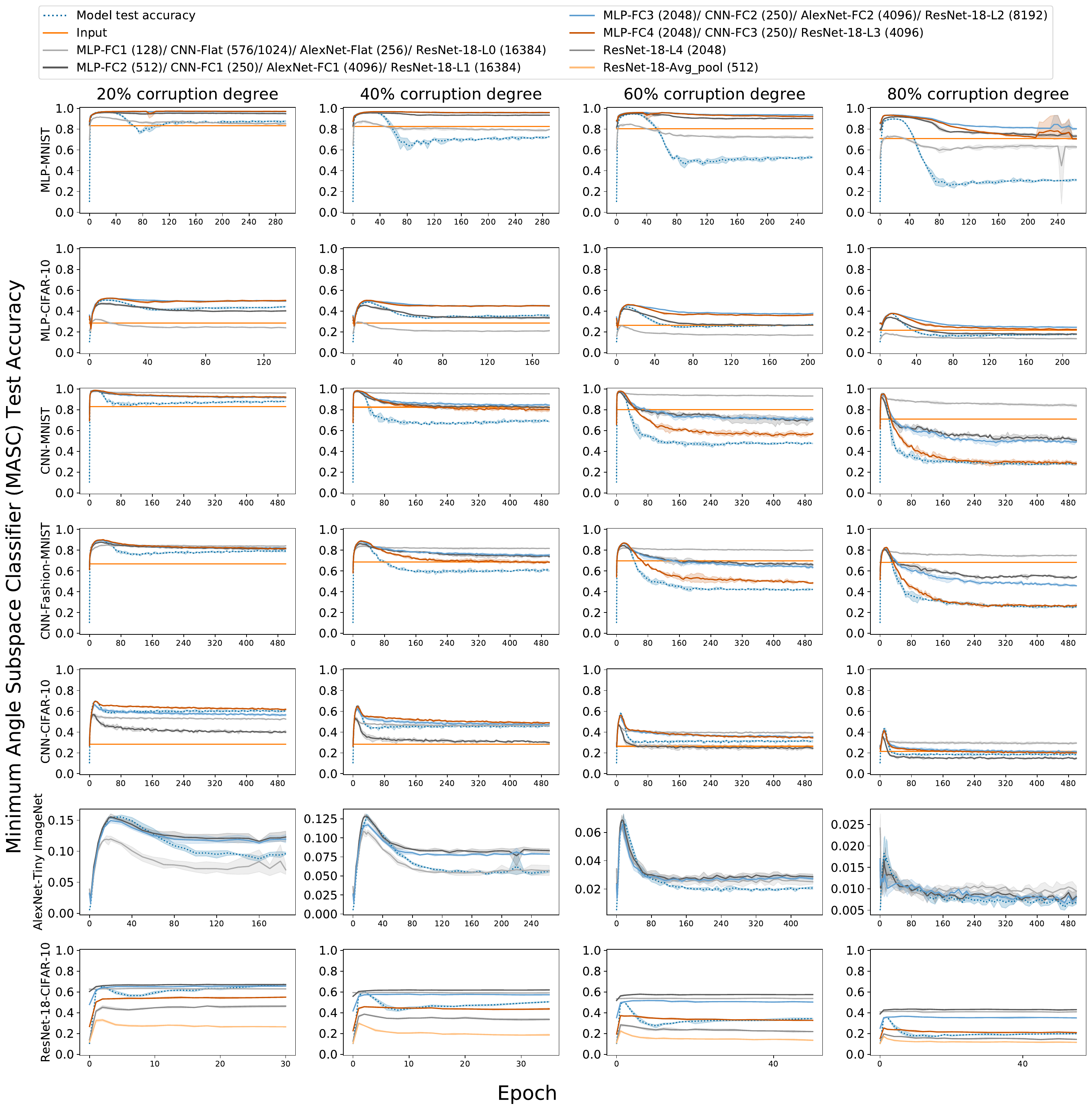}
\caption{Minimum Angle Subspace Classifier (MASC) test accuracy over epochs of  training for multiple models/datasets, where test data is projected onto  class-specific subspaces constructed at each epoch from corrupted training data with the indicated label corruption degree. The plots display MASC accuracy across different layers of the network. For reference, the evolution of test accuracy of the corresponding model (blue dotted line) over epochs of training is also shown. FC denotes fully connected layers with  $ReLU$  activation, and Flat refers to the flatten layer without $ReLU$.}
\label{main_0.99}
\end{figure*}

It is known \citep{arpit2017closer} that for models trained with label noise for corruption degrees less than 100\%, there is an early rise in model test accuracy that culminates in a peak, subsequent to which the model test accuracy falls.
%Our findings indicate a similar presence of two distinct phases in the training process, separated by the point at which the model achieves peak test accuracy. 
For various non-zero degrees of corruption, in most cases (except those with 100\% degrees of corruption), the MASC test accuracy largely follows the rise in the model's test accuracy up to its peak. However, beyond the peak, while the model's test accuracy declines significantly, the drop in MASC accuracy for the best-performing layers is usually less steep and plateaus at a higher level\footnote{Indeed, as reported in \citep{ketha2025decoding}, there are layers whose MASC performance is worse than the model performance.} over the epochs. A notable exception to these observations occurs in the early layers of ResNet-18 trained on CIFAR-10, which we discuss next.

An interesting point about the training dynamics is indeed what occurs prior to the start of training, i.e. just at the random initialization, which we plot as Epoch 0 in Figures \ref{main_0.99} and \ref{main_0.99_0}. At Epoch 0, the model always has roughly chance-level test accuracy. However, MASC demonstrates\footnote{Indeed, the performance of MASC at Epoch 0 for a subset of models has also been reported in \citep{ketha2025decoding}.} test accuracies that are significantly above chance; there is variation in the values depending on the layer in question for each model. However, for most models (except ResNet-18 trained on CIFAR-10), the rise of test accuracies over the initial epochs happens at rates comparable to that of MASC. For this reason, in those models, the model test accuracy as well as the MASC test accuracies appear to be largely overlaid on each other early in training, prior to the peak in test accuracy. For ResNet-18 trained on CIFAR-10 however, MASC test accuracies seem to improve somewhat slower over early epochs. Furthermore, in case of some early layers (i.e. L0, L1), for higher corruption degrees, MASC in fact beats the best early stopping accuracy of the model at Epoch 0, and the MASC test accuracies end up being at levels markedly above model test accuracies all through training. This may have to do with the significantly higher ambient dimensionality (16,384) of those layers. Also, L0 MASC test accuracy appears largely flat\footnote{We note that L0 does not have residual connections, unlike subsequent layers, although it is unclear if this contributes to this phenomenon.} over training, whereas L1 MASC test accuracy shows a slight rise early in training before being largely flat.

%[EDIT] For non-zero corruption degrees (except those with 100\% corruption), in most cases, for MLP, MASC accuracy on later layers performed better than MASC accuracy on early layers, whereas for CNNs MASC accuracy on early layers performed better.

One question that arises is about why there is a marked difference in MASC test accuracies across layers of the network. A definitive answer likely requires a deeper understanding of the principles used by deep networks to organize layerwise representations and their evolution across layers -- an understanding that we don't yet have, as a field. That said, one observes that MASC layer performance appears to be correlated with the ambient dimensionality of the layer, in most cases. That is, the best-performing layers tend to be those with the highest dimensionality. A notable exception is with CNN-CIFAR-10 for lower corruption degrees, where there exist layers that outperform the CNN-Flat (1024) layer. In many models (MLPs, AlexNet, ResNet-18), multiple layers have the highest dimensionality. Here, layer performance differs, albeit not significantly. Furthermore, it does not seem to be determined by their relative position of those layers in the deep network.

%With zero corruption degree, it was observed that the MASC accuracy of the last layer MLP-FC4 (2048) and CNN-FC3 (250) was closely aligned with the models' test accuracy. In few cases, such as AlexNet-Tiny ImageNet (FC1 and FC2, both with 4096 units) and MLP-CIFAR-10 (FC3 with 2048 units), the MASC accuracy even exceeded the models' test accuracy.

Our results represent progress in clarifying the origin \& evolution of latent generalization by MASC, during training. In particular, given that model generalization \& latent generalization often show a concurrent initial rise, it suggests the possibility of common mechanisms that drive both in the early phases of training, for those models. The subsequent divergence between model generalization \& latent generalization is an intriguing phenomenon, whose mechanisms merit future investigation.

% abstract
%[THINK ABOUT IF WE CAN USE SOME OF THE WORDING IN THIS PARA] However, several basic questions about this phenomenon of latent generalization remain poorly understood: (1) What is the origin and dynamics over training of latent generalization during memorization? Specifically, is it the case that model generalization and latent generalization use largely the same underlying mechanisms?  To address (1), we track the training dynamics, empirically, and find that latent generalization abilities largely peak early in training, with model generalization, suggesting a common origin for both. However, while model generalization degrades steeply over training thereafter, latent generalization falls more modestly \& plateaus at a higher level over epochs of training. These experiments lend circumstantial evidence to the hypothesis that latent generalization uses largely similar mechanisms as those that underlie the model's generalization in the early phases of training. 

% \subsection{MASC and its non-linearity}
\section{Non-linearity of MASC}
\label{non_linear_masc}
%Here, we briefly describe the Minimum Angle Subspace Classifier (MASC)  introduced in \citep{ketha2025decoding} and establish that it is in fact a quadratic classifier. 

%Let \(\boldsymbol{x_{ll}}\) denote the output of the pre-softmax layer obtained by passing datapoint \(\boldsymbol{x}\) through the deep neural network. Typically \(b_1,b_2, \dots, b_m\) are the softmax layer weights for $M$ classes. The class label of the datapoint \(\boldsymbol{x}\) is predicted by the model by \(y(\boldsymbol{x}) = \arg\max_i x_{ll}.{b_i}\).

Classically \citep{alain2018understanding}, linear probes have been used to probe layers of deep networks. However, \citep{ketha2025decoding} do not use the standard linear probe from \citep{alain2018understanding}. Linear probes are simple and interpretable, due to which they have been widely deployed. Furthermore, a linear probe demonstrating good performance indicates that the representations of the layer that the probe operates on, contain information that is linearly decodable. This implies that such information is, in principle easily decodable by downstream layers. While MASC is a elegant probe with a nice geometric interpretation, \citep{ketha2025decoding} do not consider the question of whether it is a linear probe. Below, we prove that MASC \citep{ketha2025decoding} is in fact a non-linear classifier. In particular, it is quadratic in the layerwise output of the layer that it is applied to. This means that it uses a quadratic decision surface.

\begin{lem} \label{prop1}
MASC is a quadratic classifier.
\end{lem}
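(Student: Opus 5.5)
We would first formalize MASC. Fix the layer and let $z\in\mathbb{R}^d$ be its output for a test input. For each class $c$, the training representations labelled $c$ are used, via PCA through the origin, to produce an orthonormal basis $u^{(c)}_1,\dots,u^{(c)}_{k_c}$ of a subspace $S_c$. Stack these as columns of $U_c\in\mathbb{R}^{d\times k_c}$ with $U_c^\top U_c=I$. The orthogonal projector onto $S_c$ is then $P_c=U_cU_c^\top$, which is symmetric and idempotent. MASC predicts $\arg\min_c \theta_c(z)$, where $\theta_c(z)\in[0,\pi/2]$ is the angle between $z$ and $P_cz$.

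Next we would compute this angle explicitly. Since $P_c$ is symmetric idempotent, $z^\top P_c z=\|P_cz\|^2$, and therefore
\[
\cos\theta_c(z)=\frac{z^\top P_cz}{\|z\|\,\|P_cz\|}=\frac{\|P_cz\|}{\|z\|}.
\]
Cosine is strictly decreasing on $[0,\pi/2]$, and squaring is monotone on nonnegative reals. Minimizing $\theta_c$ is therefore equivalent to maximizing
\[
\cos^2\theta_c(z)=\frac{z^\top P_c z}{z^\top z}.
\]
The denominator $z^\top z$ does not depend on $c$, so for $z\neq 0$ the MASC rule is
\[
h^{\mathrm{MASC}}(z)=\arg\max_{c} \; z^\top P_c\, z .
\]
This is a maximum over $C$ quadratic forms in $z$, so the discriminant functions are quadratic. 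The decision boundary between classes $c$ and $c'$ is the quadric $\{z : z^\top (P_c-P_{c'})z=0\}$, a homogeneous quadratic cone. Hence MASC is a quadratic classifier in the sense used for QDA.

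The step needing the most care is making these reductions airtight. We must check that the angle actually lies in $[0,\pi/2]$; this holds because $z^\top P_cz\ge 0$. We must also treat the degenerate cases $z=0$ and $P_cz=0$. If $P_cz=0$ we take $\theta_c=\pi/2$, which is consistent with $\cos^2\theta_c=0$, and $z=0$ can be excluded as measure-zero or handled by a tie-breaking convention.

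To justify calling the classifier genuinely non-linear, we would add a remark. The boundary is a hyperplane only when $P_c-P_{c'}$ has rank at most $2$ with a special sign structure, which does not hold in general. For example, if $S_c$ and $S_{c'}$ are distinct coordinate axes in $\mathbb{R}^2$, the boundary is the pair of lines $z_1^2=z_2^2$. This is not a single hyperplane, so no linear classifier reproduces MASC in general.
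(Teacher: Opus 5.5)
Your proposal is correct and follows essentially the paper's route. Writing $P_c=U_cU_c^\top$ and reducing the minimum-angle rule to $\arg\max_c z^\top P_c z=\arg\max_c\sum_i (z\cdot u^{(c)}_i)^2$ is exactly the paper's $\arg\max_c (x_l\cdot x_l^c)$, and your cosine-monotonicity step plays the role of the paper's appendix equivalence proof; your degenerate-case handling and the two-axis example (region $\{|z_1|>|z_2|\}$, not a half-space) go slightly beyond the paper, which only exhibits the quadratic form of the discriminants without showing the rule cannot also be realized linearly.
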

\begin{proof}
Let \(\boldsymbol{x_{l}}\) denote the output of the layer $l$ of the deep network when it is given input  \(\boldsymbol{x}\). Let \(\boldsymbol{p_1^c}, \boldsymbol{p_2^c}, \ldots, \boldsymbol{p_k^c}\) be an orthonormal basis\footnote{which is typically estimated via PCA, where $k$ is the number of principal components.} of the subspace ${\cal S}_c$ corresponding to class $c$. Let $\boldsymbol{x_l^c}$ be the projection of $\boldsymbol{x_l}$ on ${\cal S}_c$. We have
\begin{equation} \label{main_1}
   \boldsymbol{x_l^c} = (\boldsymbol{x_l} \cdot \boldsymbol{p_1^c}) ~ \boldsymbol{p_1^c} + \ldots +(\boldsymbol{x_l} \cdot \boldsymbol{p_k^c}) ~\boldsymbol{p_k^c}
\end{equation}

\noindent
Now, MASC on layer $l$ predicts\footnote{This is equivalent to the formulation of MASC in \citep{ketha2025decoding}, where we maximize a cosine similarity. See Section \ref{expanded_proof} for a proof of equivalence.}
% Note that, here, each classwise expression that we maximize over, is scaled by the same quantity, i.e. $|x_l|^2$.} 
the label of \(\boldsymbol{x}\) as 
\begin{equation}\label{main_2}
    \operatorname*{arg\,max}_c (\boldsymbol{x_l} \cdot \boldsymbol{x_l^c}) = \operatorname*{arg\,max}_c ((\boldsymbol{x_l} \cdot \boldsymbol{p_1^c})^2 + \ldots +(\boldsymbol{x_l} \cdot \boldsymbol{p_k^c})^2)
\end{equation}

\noindent
which is quadratic in $\boldsymbol{x_l} $. This establishes that MASC is a quadratic classifier.
\end{proof}

\section{Vector Linear Probe Intermediate-layer Classifier (VeLPIC): A new linear probe}
\label{mvc_desc}
% abstract
 % (2) Is the specific nature of the probe critical for our ability to extract latent generalization from the model's layerwise outputs? To investigate (2), we examine the MASC probe and show that it is a quadratic classifier. The question in (2) thus becomes whether the quadratic nature of the MASC probe underlies its remarkable effectiveness in extracting latent generalization. If this were so, a linear probe constructed along these lines would not be as effective. To investigate this, we designed a new linear probe for this setting, and find, surprisingly, that it has superior generalization performance in comparison to the quadratic probe, in most cases. This suggests that the quadratic nature of the probe is not critical in extracting latent generalization. Importantly, the effectiveness of the linear probe enables us to answer (3) in the affirmative. 

Given that MASC is inherently a non-linear classifier as proved above, a natural question is if its remarkable ability to decode generalization from hidden representations of memorized networks is largely a consequence of its non-linearity. In other words, if the quadratic nature of MASC is indeed responsible for its effectiveness, then a corresponding linear probe would be expected to perform substantially worse. It raises the question of the extent to which the latent generalization reported in \citep{ketha2025decoding} is linearly decodable from the layerwise representations of the network.

To investigate this, we build a linear probe analogous to MASC. We sought to retain the same broad idea, namely determine an instance of a mathematical object per class and measure closeness of the layerwise output of an incoming datapoint to these objects with the prediction corresponding to the class whose object was closest in this sense. In contrast to \citep{alain2018understanding}, where parameters of their linear probe are learned iteratively by minimizing a cross-entropy loss, we seek to determine the linear probe parameters directly via the geometry of the class-conditional training data.  We choose to simply use a vector\footnote{We note that previous work \citep{das2007classwise} has used the idea of using classwise-PCA for classification in the setting of EEG data, although they use a Bayes classifier to perform the classification.} as this mathematical object and measure closeness in the angle sense. We call this probe the Vector Linear Probe Intermediate-layer Classifier
(VeLPIC). As we discuss subsequently, we find, surprisingly, that this choice is often more effective than MASC. Secondly, we show that we can use the parameters of the probe as applied to the last layer, to modify the model weights to immediately confer the corresponding generalization to the model.  
%To address this, we introduce a slightly different linear classifier -- Maximum Vector Classifier (VeLPIC)  -- that leverages class-specific subspaces derived from corrupted training data representations at hidden layers. 

We now discuss how the vector corresponding to each class in VeLPIC is constructed. Each class vector is determined using only the top principal component from PCA run on augmented\footnote{We augment class training data points with their negative, so as to obtain a 1-D subspace, rather than a 1-D affine space, along the lines of the subspace construction procedure for MASC.} class-conditional corrupted training data. However, the first principal component can manifest in two opposite directions (i.e. the vector or its negative). This is important here\footnote{Observe that this isn't an issue with MASC, since it is a quadratic classifier.} because incoming data vectors can be ``close" to this class vector, even though their angles are obtuse and closer to 180$^\circ$. VeLPIC resolves this directional issue by aligning the class vector based on the sign of the projection of the training data mean; if the mean of the training data projected on this principal component is negative, the direction of the principal component is flipped to obtain the class vector; otherwise, it is retained as is. 

Formally, for a given test data point \(\boldsymbol{x}\), let \(\boldsymbol{x_l}\) denote its activation at layer \(l\) obtained in the forward pass of \(\boldsymbol{x}\) through the deep network until the output of layer $l$. For layer \(l\), let \(\{{\cal P}_m\}_{m=1}^M\) be the top principal component vectors, one each per class, of the class-conditional corrupted training data and \(\{T_m\}_{m=1}^M\) be its corresponding\footnote{i.e. $T_i$ is the mean of projecting training data points on ${\cal P}_i$.} projection means, where $M$ is the number of classes. Let \(\{{\cal V}_m\}_{m=1}^M\) be unit vectors representing VeLPIC class vectors. VeLPIC uses \(\{{\cal V}_m\}_{m=1}^M\) to predict the label of \(\boldsymbol{x_l}\) based on its maximum projection among these class vectors\footnote{This is equivalent to minimum angle to the VeLPIC class vectors.}, as outlined in Algorithm~\ref{algo_VELPIC}.

\begin{figure*}[ht]
\centering
\includegraphics[width=\linewidth]{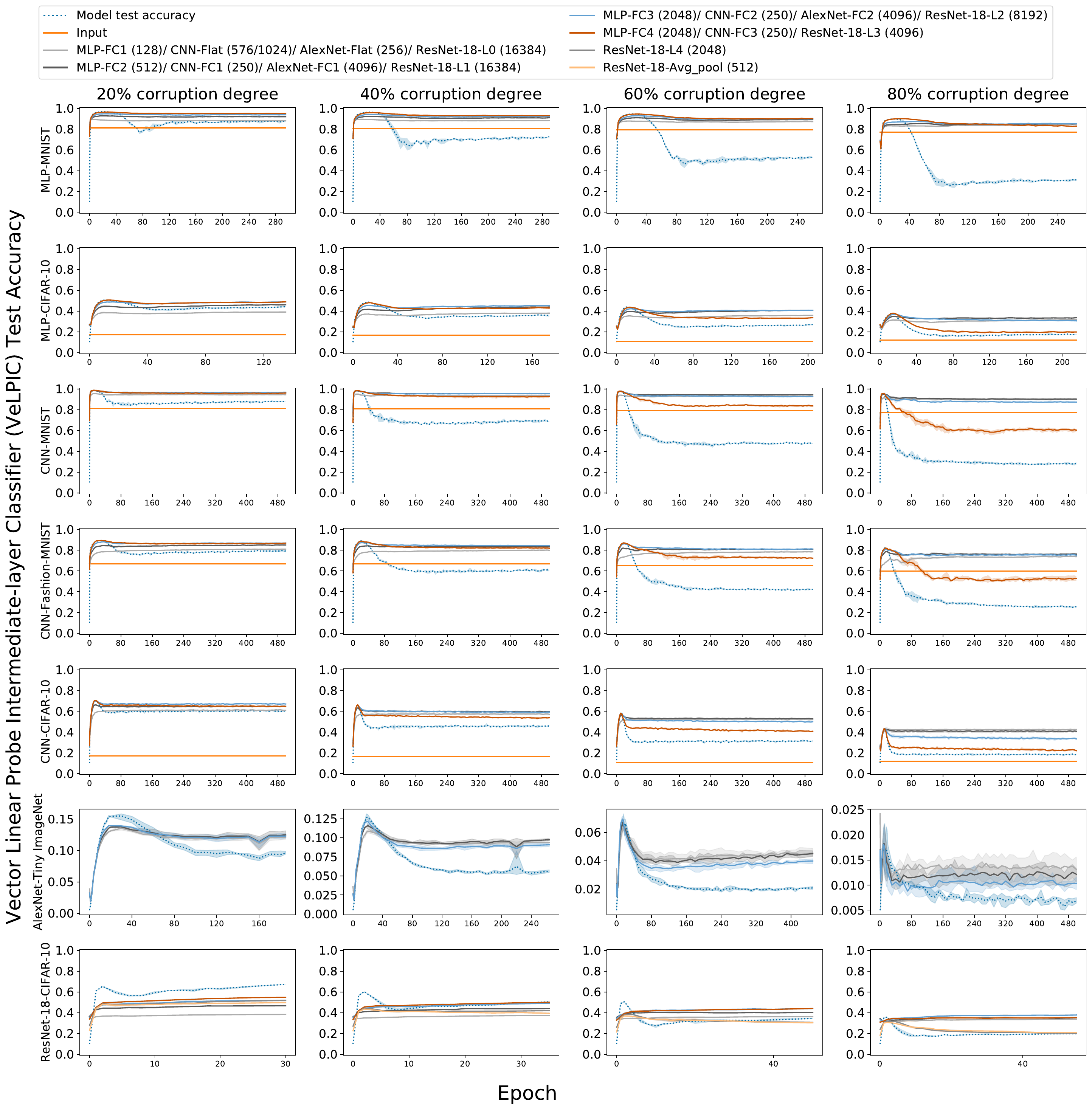}
\caption{Vector Linear Probe Intermediate-layer Classifier (VeLPIC) test accuracy during training of the network, where test data is projected onto class vectors constructed at each epoch from training data with the indicated label corruption degrees. The plots display VeLPIC accuracy across different layers of the network for various model–dataset combinations. For reference, the test accuracy of the models (blue dotted line) over epochs of training is also shown. 
%Each row corresponds to a specific corruption level, while columns represent different models, as labeled. 
FC denotes fully connected layers with  $ReLU$  activation, and Flat refers to the flatten layer without $ReLU$.}
\label{main_1}
\end{figure*}

\begin{algorithm}[htpb]
\caption{\textbf{Vector Linear Probe Intermediate-layer Classifier (VeLPIC)}}
\label{algo_VELPIC}
\textbf{Input:} Principal component vectors \(\{{\cal P}_m\}_{m=1}^M\), projection training means \(\{T_m\}_{m=1}^M\), layer $l$ output \(\boldsymbol{x_l}\), class labels \(\{C_m\}_{m=1}^M\). \\
\textbf{Output:} Predicted label \(y(\boldsymbol{x_l})\).
\begin{algorithmic}[1]
\FOR{each class \(m = 1, \dots, M\)}
    \IF{\(T_m < 0\)}
        \STATE \({\cal V}_m \gets -{\cal P}_m\) 
    \ELSE
        \STATE \({\cal V}_m \gets {\cal P}_m\) 
    \ENDIF
 
\ENDFOR

\FOR{each class \(m = 1, \dots, M\)}
    \STATE \(\boldsymbol{x_{lm}} \gets \text{Projection of } \boldsymbol{x_l} \text{ onto } {\cal V}_m\)
\ENDFOR

\STATE \(y(\boldsymbol{x_l}) \gets C_j\) where \(j = \arg\max_m \boldsymbol{x_{lm}}\)
\STATE \textbf{Return:} \(y(\boldsymbol{x_l})\)
\end{algorithmic}
\end{algorithm}

% The subspaces \(\{{\cal S}_m\}_{m=1}^M\) are computed using PCA on class-specific training activations at each hidden layer. To ensure the subspaces are passing through the origin, the data is augmented by including the negative of each sample, guaranteeing zero empirical mean before applying PCA, as described in \citet{ketha2025decoding}. In practice, each subspace ${\cal S}_m$ is represented by its principal components, which serve as an orthonormal basis for the subspace.

\subsection{Training dynamics of the linear probe}
\label{velpic}

% Having established that  MASC is a nonlinear classifier
Here, we examine the performance of VeLPIC, in contrast to that of MASC during training, to assess the extent of linear decodability of latent generalization. It is worth mentioning that favorable performance of VeLPIC implies high linear decodability; however, poor performance of VeLPIC does not rule out the possibility of the existence of another linear probe with superior performance.

%analogous to the experiment in the previous section.
%it raises the question: is its improved generalization in memorized models simply a result of its nonlinearity? To explore this, we introduce the  (VeLPIC) -- a linear classifier -- that leverages class-conditional subspaces aligned using corrupted training data.

VeLPIC test accuracy during training for MLP-MNIST, MLP-CIFAR-10, CNN-MNIST, CNN-Fashion-MNIST, CNN-CIFAR-10, AlexNet-Tiny ImageNet and ResNet-18-CIFAR-10 are shown in Figure~\ref{main_1}. The results with 0\% and 100\% corruption degrees are shown in Figure~\ref{main_1_0} in Section \ref{during_velpic}. The difference between VeLPIC test accuracy and MASC test accuracy are shown in Figure~\ref{diff_vel_masc}. Results for 0\% and 100\% corruption degrees are available in Section \ref{diff_vel}.

\begin{figure*}[ht]
\centering
\includegraphics[width=\linewidth]{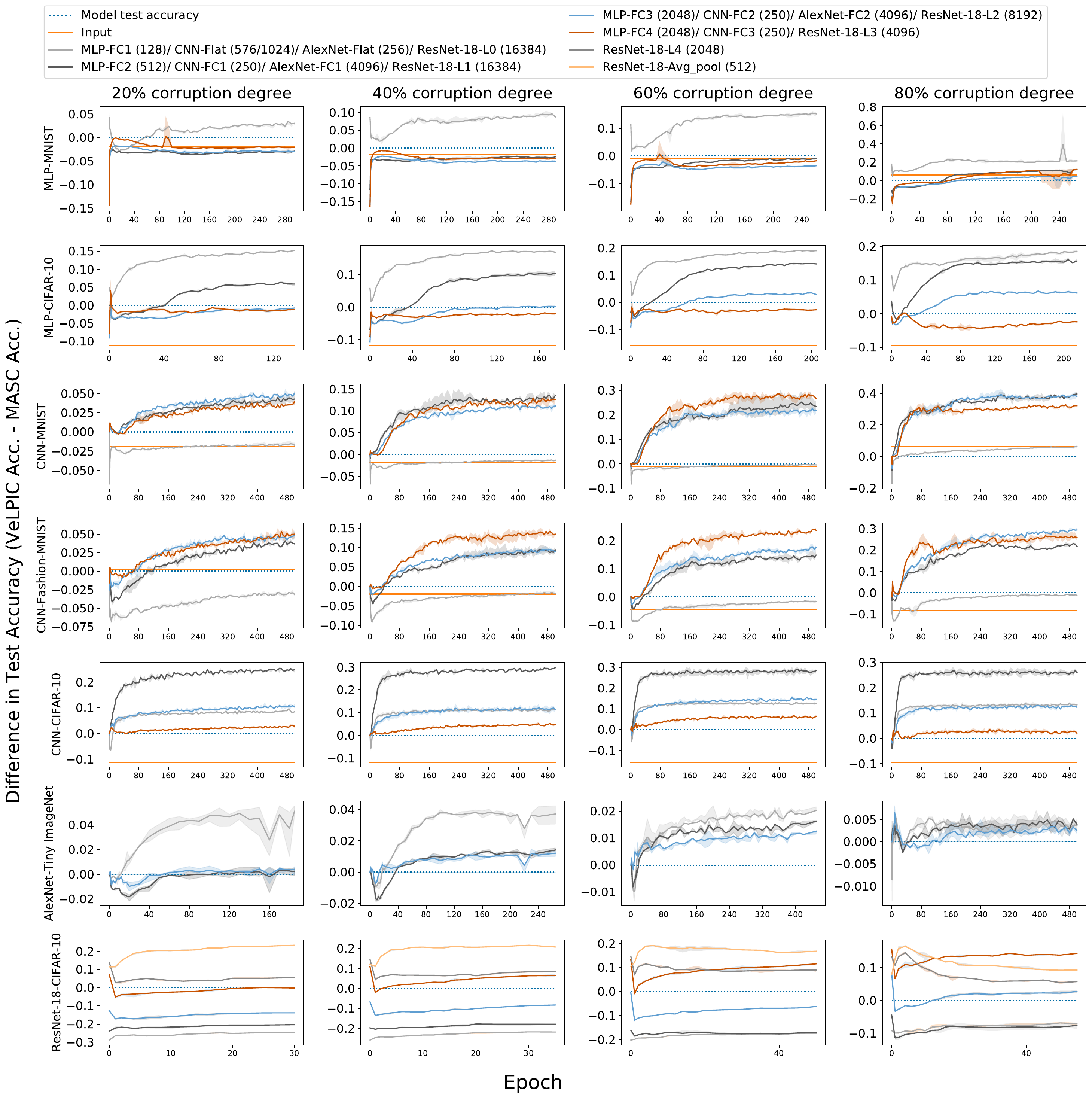}
\caption{Difference in test accuracy (VeLPIC Accuracy -  MASC Accuracy) during training of the network, where test data is projected onto class vectors constructed at each epoch from training data with the indicated label corruption degrees. The plots display difference in accuracy across different layers of the network for various model–dataset combinations. For reference, the test accuracy of the models (blue dotted line) over epochs of training is also shown, which would be 0.}
\label{diff_vel_masc}
\end{figure*}

Unexpectedly, the performance of VeLPIC is often, but not always, better than that of MASC. For representations from many layers, VeLPIC is able to extract significantly better latent generalization performance than MASC and our results show that, for these layers, VeLPIC's performance plateaus at significantly higher levels than MASC. There also exist many cases, where MASC does indeed outperform VeLPIC. In particular, in ResNet-18\footnote{Which is arguably, the most modern network tested. We do note, however, that AlexNet is a larger model, by the measure of number of trainable parameters.} trained on CIFAR-10, MASC substantially outperforms VeLPIC's test accuracy in layers L0 and L1, which were the best performing layers for MASC in this model. This suggests the possibility that indeed in these layers, MASC utilizes its nonlinearity in ways that VeLPIC is not able to. There is also the possibility of other\footnote{As we show in the next section, the only other baseline linear probe tested, in fact shows worse performance than VeLPIC for these two layers, for higher corruption degrees.} linear probes that may be able to do better for this model. Similarly, in multiple other cases (e.g. early layers of CNN-MNIST, CNN-Fashion-MNIST, later layers of MLP-MNIST, MLP-CIFAR-10), MASC outperforms VeLPIC. 

One factor that appears to be correlated with poor linear decodability as manifested by VeLPIC, is the ambient dimensionality of layerwise outputs. Recall that, in the previous section, we observed that MASC appears to be quite effective when the ambient dimensionality of the layer in question is high. The results in this section suggest that VeLPIC performance is concomitantly lower in such layers, in comparison to MASC. This hypothesis is especially supported by results from layers L0 and L1 of ResNet-18, which have the highest layerwise dimensionality of 16,384. However, results from CNN-CIFAR-10 run contrary to this hypothesis. Here, VeLPIC outperforms MASC in the CNN-Flat layer, with dimensionality 1024, which has the highest dimensionality of all layers tested in this model. Therefore, while ambient dimensionality may be an important factor, the extent of linear decodability or non-decodability of latent generalization during memorization might involve an interplay of multiple factors that require careful investigation. 

Interestingly, with some CNN models (e.g. CNN-Fashion-MNIST), MASC decodes good generalization in early layers but not later layers; however, VeLPIC is able to extract comparable generalization from later layers as well. 

We compare the test accuracy of the model, MASC, and VeLPIC for both randomly initialized and trained models across varying corruption degrees in Section~\ref{random}. Furthermore, we study the impact of dropout as a regularizer in latent generalization (MASC \& VeLPIC) during memorization in Section \ref{dropout}. 

In closing, our results with the new linear probe indicate that there are multiple cases where one can linearly decode latent generalization to a greater extent than reported with MASC. Here, as well, it is unclear if there exist other quadratic classifiers that can extract better latent generalization than VeLPIC\footnote{A general quadratic classifier, in principle, will do as well as VeLPIC, by keeping only its linear terms and setting its quadratic terms to zero.}. On the other hand, there are important exceptions where MASC still significantly outperforms this linear probe, especially for ResNet-18, which is the most modern network architecture tested. It is unclear if the underperformance of VeLPIC is because a significant measure of latent generalization in these cases is not linearly decodable. Alternatively, there could exist other linear probes that could match or outperform MASC in these cases. 
%Finally, what causes latent generalization during memorization to be or not to be linearly decodable is an open question. Answering this question might need a deeper understanding of representations used by Deep Networks during memorization.

% [REWRITE THIS PARAGRAPH]These results indicate that the quadratic nature of MASC is not essential in extracting latent generalization from most models, where latent generalization is linearly decodable, although there exist models which manifest significantly better latent generalization with MASC, but not VeLPIC.

%This indicates that the probe’s quadratic nature is not essential in extracting latent generalization.

% Furthermore, we train logistic regression probe on over the layers of the network during training and compare their results with MASC and VeLPIC probe in Section~\ref{lg_probe}. 
\section{Comparing MASC \& VeLPIC with a baseline linear probe}
\label{lg_probe}

% reviewer comment : 3.1. Adding an analysis with the Linear Probes trained with Cross Entropy would be an informative experiment for the readers. 

While VeLPIC is indeed a linear probe, it differs from standard linear probes \citep{alain2018understanding} which are trained by iteratively minimizing a suitable loss function, unlike VeLPIC whose parameters are directly derived from the class-conditional geometry of internal representations. In order to compare the relative performance of MASC and VeLPIC with a classic linear probe, here, we study the evolution of latent generalization during training using a logistic regression probe (LR probe). 

For a given model and network layer, we train a logistic regression probe on the outputs of the layer for 20 epochs using cross-entropy loss with the Adam optimizer (for learning rate \num{1e-3}). The test accuracy of the logistic regression probe during training of the memorized models, for all models and corruption degrees ranging from 20\% to 80\%, is shown in Figure~\ref{log_reg}. Corresponding results for  0\% corruption degree is presented in Figure~\ref{log_reg_0}. For the non-zero corruption degrees, we find interestingly that in most cases, there is at least a layer or two, whose LR probe tends to largely follow the dynamics of the model's test accuracy over training, including after the early peak, with ResNet-18 being a notable exception. Secondly, in many cases where the test accuracy of the LR probe does not follow that of the model (e.g. CNN-CIFAR-10 for higher corruption degrees), the test accuracies of the LR probe after the early peak tend to plateau at a lower level than is the case with MASC and VeLPIC probes. Thirdly, for many layers, the accuracy of the LR probe tends to underperform\footnote{The corresponding differences are plotted in Figures \ref{diff_masc_lg} and \ref{diff_velpic_lg}.} both MASC and VeLPIC. 

% For comparison, we also perform
 %  log_reg over the layer of the network
\begin{figure*}[htbp]
  \centering
  \includegraphics[width=\linewidth]{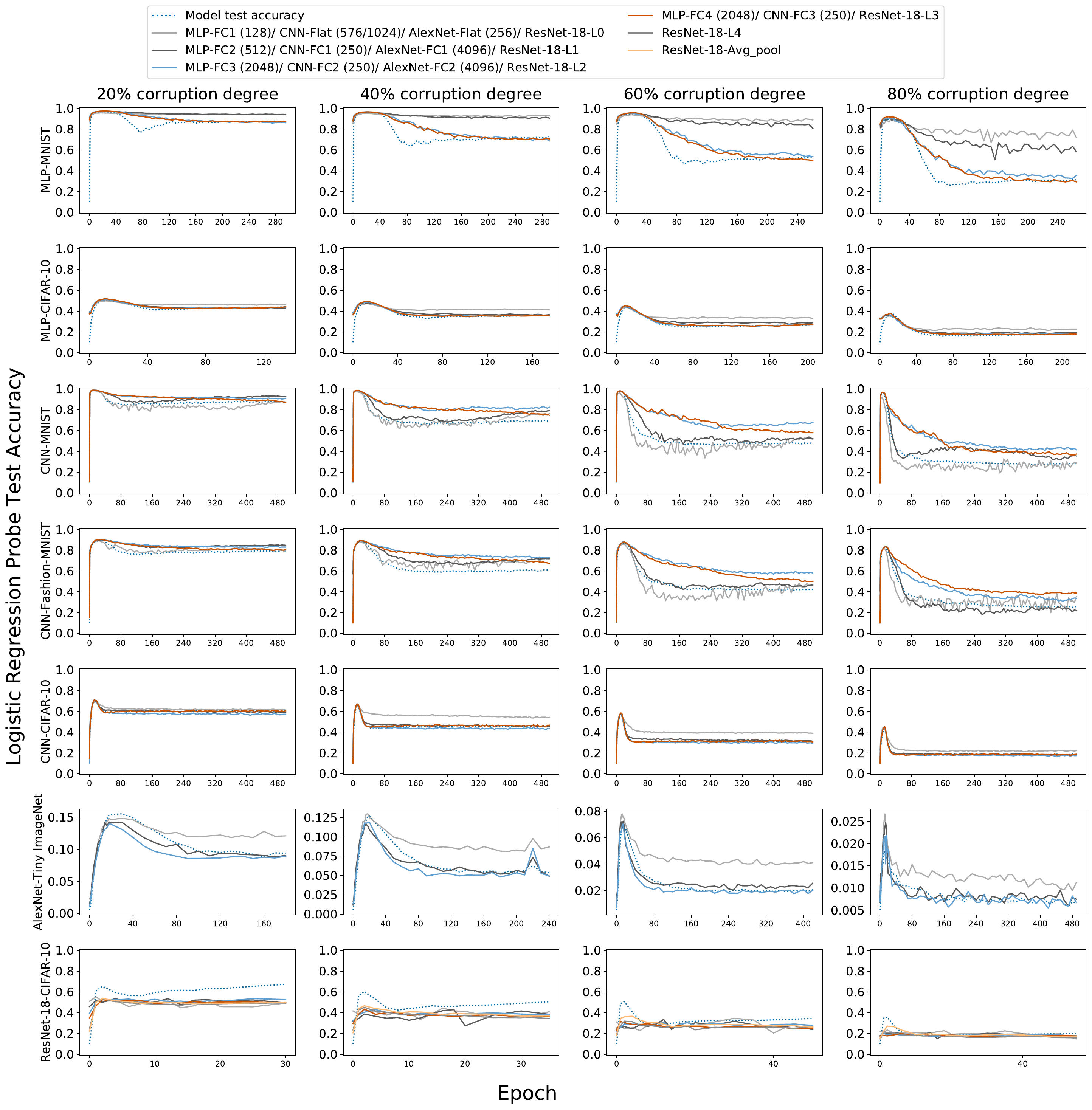}
  \caption{Logistic regression probe's test accuracy  over epochs of  training for multiple models/datasets. The plots display logistic regression probe's accuracy across different layers of the network. For reference, the evolution of test accuracy of the corresponding model (blue dotted line) over epochs of training is also shown.
 }
  \label{log_reg}
\end{figure*}

%  different test accuarcy masc and lg
\begin{figure*}[htbp]
  \centering
  \includegraphics[width=\linewidth]{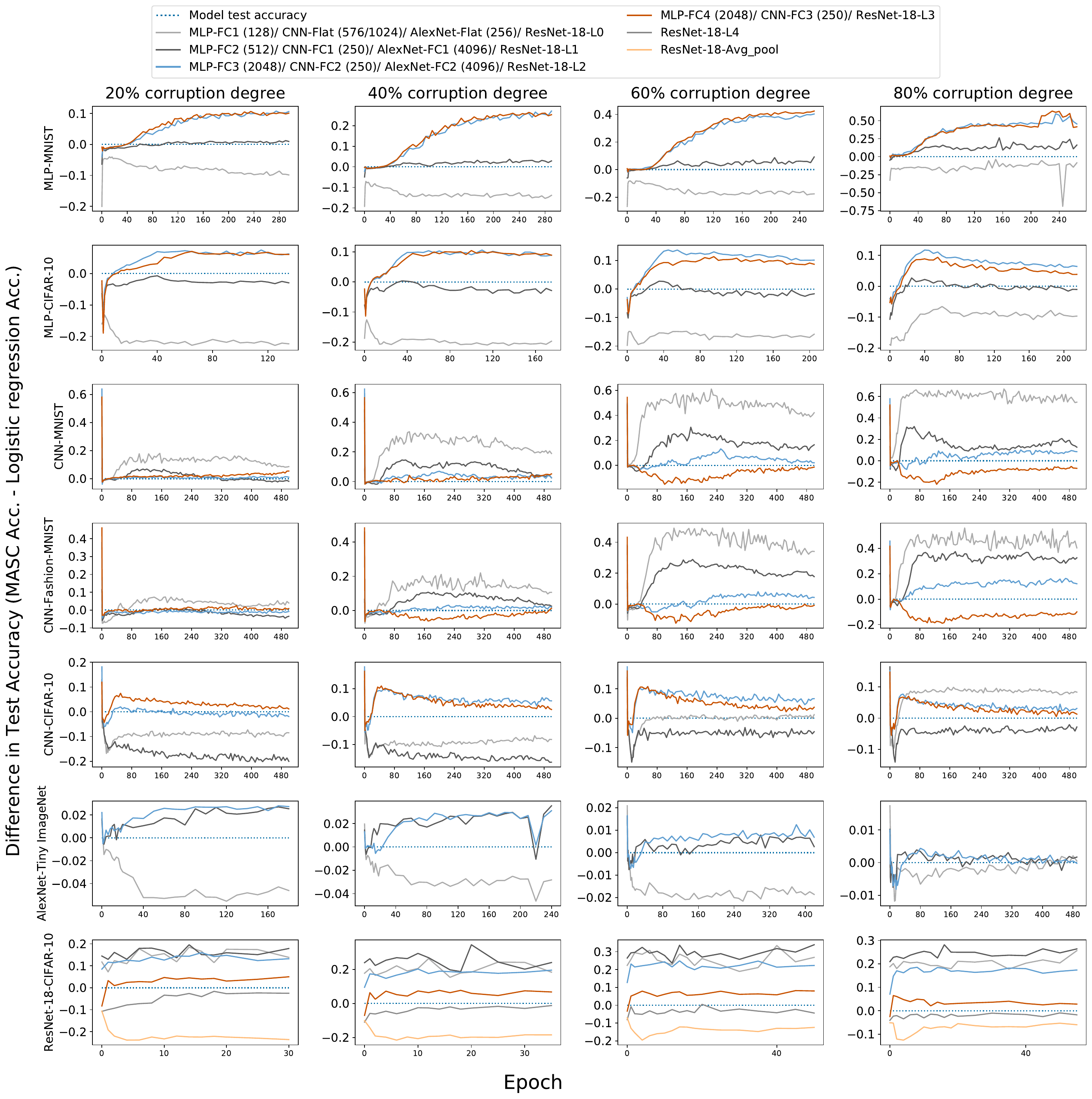}
  \caption{Difference in test accuracy (MASC Accuracy -  Logistic regression probe Accuracy) during training of the network, where for MASC test data is projected onto class vectors constructed at each epoch from training data with the indicated label corruption degrees. The plots display difference in accuracy across different layers of the network for various model–dataset combinations. For reference, the test accuracy of the models (blue dotted line) over epochs of training is also shown, which would be 0.
 }
  \label{diff_masc_lg}
\end{figure*}

%  different test accuarcy velpic and lg
\begin{figure*}[htbp]
  \centering
  \includegraphics[width=\linewidth]{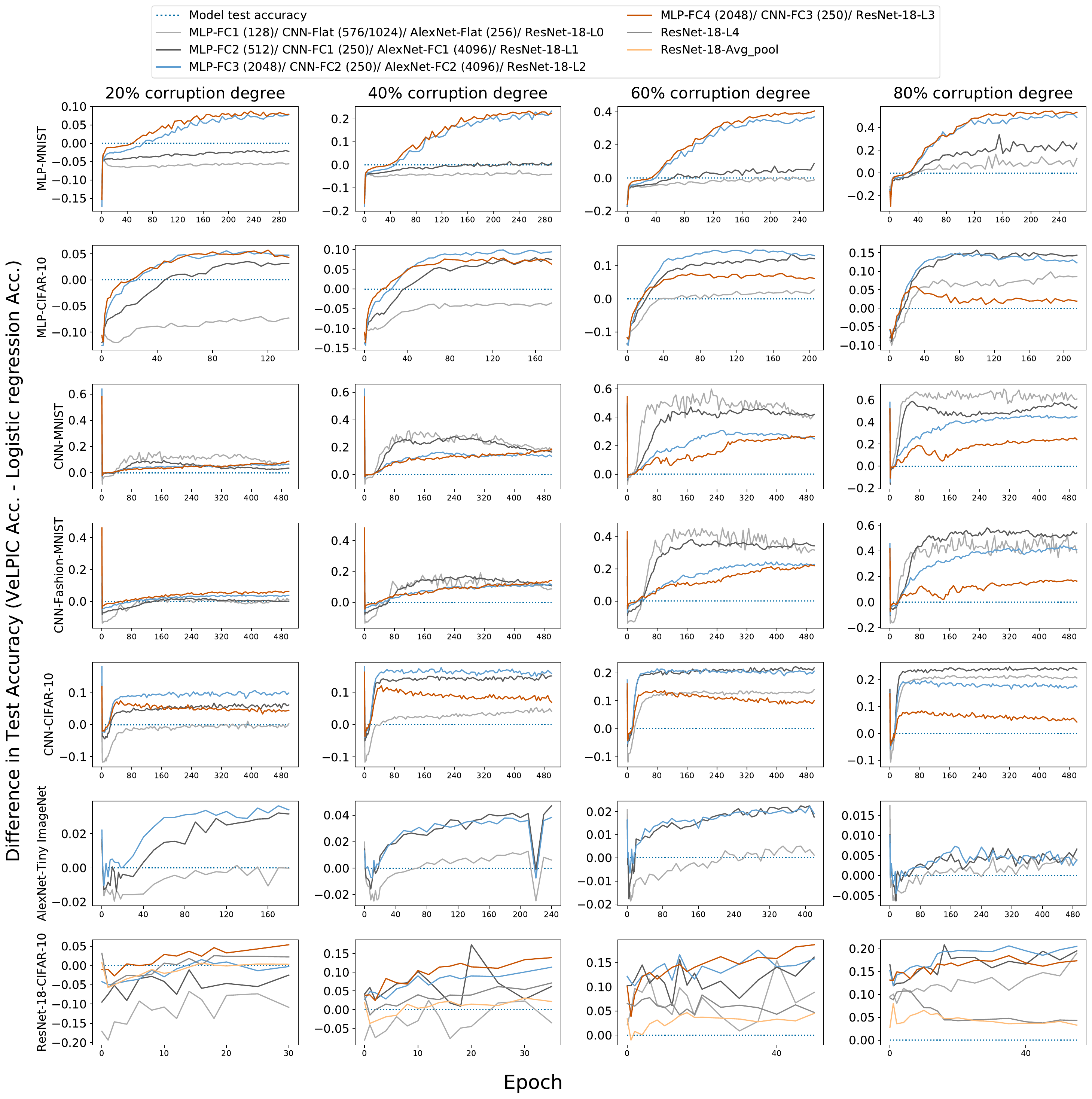}
  \caption{Difference in test accuracy (VELPIC Accuracy -  Logistic regression probe Accuracy) during training of the network, where for VELPIC test data is projected onto class vectors constructed at each epoch from training data with the indicated label corruption degrees. The plots display difference in accuracy across different layers of the network for various model–dataset combinations. For reference, the test accuracy of the models (blue dotted line) over epochs of training is also shown, which would be 0.
 }
  \label{diff_velpic_lg}
\end{figure*}

We compare the performance of the logistic regression probe with that of the MASC probe in Figures~\ref{diff_masc_lg} and~\ref{diff_masc_lg_0}. We find that MASC outperforms LR for many layers in multiple models; however, there are several layers where the opposite effect is true. There is also some regularity in the identity of layers where this happens. For some models (MLPs, CNN-CIFAR-10 and AlexNet-Tiny ImageNet) the LR probe outperforms MASC in the early layers and vice versa in the later layers. For other models (CNN-MNIST, CNN-Fashion-MNIST, ResNet-18-CIFAR-10) one sees the opposite effect.

We further compare the logistic regression probe results with the VeLPIC probe in Figures~\ref{diff_velpic_lg} and~\ref{diff_velpic_lg_0}. Here, we find that especially for higher corruption degrees and later during model training, VeLPIC almost always outperforms the LR probe. Even for low corruption degrees, in most layers (with ResNet-18-CIFAR-10 on 20\% corruption degree being a notable exception), VeLPIC outperforms the LR probe.

A natural question that arises here is about why the logistic regression probe underperforms VeLPIC with respect to generalization to true labels, even though both of them are linear classifiers. We speculate that this might have to do with the fact that classic linear probes seek to minimize a loss which corresponds to doing well on the corrupted training set, which can be anthithetical to doing well on the test set which has true labels. However, techniques such as MASC and VeLPIC have an unsupervised flavor to them that does not seek to optimize training set performance explicitly, even though they operate on the same corrupted training set. This view requires careful future investigation.

\section{Transferring latent generalization to model generalization}
\label{weight_change}

% abstract
 % (3) Does there exist a way to immediately transfer latent generalization to model generalization by suitably modifying model weights directly? On the one hand, this question is conceptually important because it establishes conclusively that the latent generalization manifested by the probe is also within reach of the model, with exactly the information that the model was provided during training, namely the corrupted training data. On the other hand -- and more pragmatically -- it also suggests the possibility of "repairing" a trained model that has memorized, without requiring expensive retraining from scratch.  Importantly, the effectiveness of the linear probe enables us to answer (3) in the affirmative. Specifically, using this new linear probe, we devise a way to transfer the latent generalization present in last-layer representations to the model by directly modifying the model weights. This immediately endows such models with improved generalization, i.e. without additional training. Our findings provide a more detailed account of the rich dynamics of latent generalization during memorization, provide clarifying insight on the specific role of the probe in latent generalization, as well as demonstrate the means to leverage this understanding to directly transfer this generalization to the model. 

Given that latent generalization often exceeds the models' generalization ability, a natural question is whether this hidden latent generalization can be transferred directly to the model by appropriately modifying its weights. An affirmative answer to this question has implications that the phenomenon of latent generalization is not mainly driven by the remarkable effectiveness of the probe (MASC) used in our prior work \citep{ketha2025decoding}. Conceptually, this question is significant because it demonstrates that the probe's latent generalization is inherently accessible to the model using only the corrupted data it was trained on. Yet, the model during the act of standard training does not end up generalizing to the same extent, for reasons that we don't understand well. Practically, it raises the possibility of repairing memorized model using the latent representations, without the cost of retraining from scratch.

% If this were possible, it would imply that latent generalization is not primarily a consequence of an effective probe (MASC). 
Here, we ask if the latent generalization in models that memorize, can be directly transferred to the model, in order to immediately improve its generalization. 
To this end, it turns out that the class vectors of VeLPIC applied to the last layer can be directly substituted in the pre-softmax layer of the model as an intervention that transfers VeLPIC's generalization performance to the model, without further training. We elaborate below on how this is so. 

Consider a model whose last layer (i.e. the layer preceding the pre-softmax layer) consists of \( d \) units.
%and \( H \in \mathbb{R}^{N \times d} \) represent the activations from this layer for \( N \) training samples. 
%Let the output / pre-softmax layer corresponding to \( K = 10 \) classes. For each class \( k \), let \(\mathcal{C}_k\) denote the set of training samples labeled as class \( k \), which may include incorrectly labeled instances due to label corruption.
 %To construct class-specific subspaces, we create zero-centered feature sets \(\mathcal{F}_k\) by including each \(\boldsymbol{c} \in \mathcal{C}_k\) along with its negation. We apply PCA to \(\mathcal{F}_k\) and retain only the top component \(\boldsymbol{v}_k \in \mathbf{R}^{2048}\). 
 Let \(\boldsymbol{v}_j \in \mathbb{R}^d\) be the VeLPIC class vector for class $j$. The new pre-softmax weight matrix 
\(\mathbf{W}_{\text{pre-softmax}} \in \mathbb{R}^{M \times d}\) 
is constructed as:
% \[
% \mathbf{W}_{\text{pre-softmax}} = 
% \begin{bmatrix}
% \boldsymbol{v}_1^\top \\
% \boldsymbol{v}_2^\top \\
% \vdots \\
% \boldsymbol{v}_{M}^\top
% \end{bmatrix}
% \]

\vspace{-0.5cm}
\begin{equation}
\mathbf{W}_{\text{pre-softmax}} = 
\left(
\begin{bmatrix}
\boldsymbol{v}_1 &
\boldsymbol{v}_2 &
\cdots &
\boldsymbol{v}_{M}
\end{bmatrix}
\right)^\top
\end{equation}
\vspace{-0.5cm}

This weight matrix \( \mathbf{W}_{\text{pre-softmax}} \)  replaces the original pre-softmax weights, and all biases are set to zero. It is straightforward to see that this substitution results in the model making the same predictions as VeLPIC applied to the last layer. While this substitution is fairly straightforward, it needs the  linear probe, i.e. VeLPIC, for it to work. In particular, we note that it is unclear if transferability can happen with MASC alone and therefore the development of VeLPIC was an enabling factor in helping us answer this question.

During model training, we replace the pre-softmax weights with VeLPIC vectors, as indicated above and evaluate the model's performance on the test dataset at each epoch. Figure~\ref{model_weight} presents these results for MLP-MNIST, MLP-CIFAR-10, CNN-MNIST, CNN-Fashion-MNIST, CNN-CIFAR-10, AlexNet-Tiny ImageNet and ResNet-18-CIFAR-10. Results for models with 0\% and 100\% corruption levels, for all model-dataset pairs are presented in Figure~\ref{model_weight_1_0} in Section~\ref{transfer}. 
%Corrupted training accuracies  are also included in the Appendix.

\begin{figure*}[ht]
  \centering
  \includegraphics[width=\linewidth]{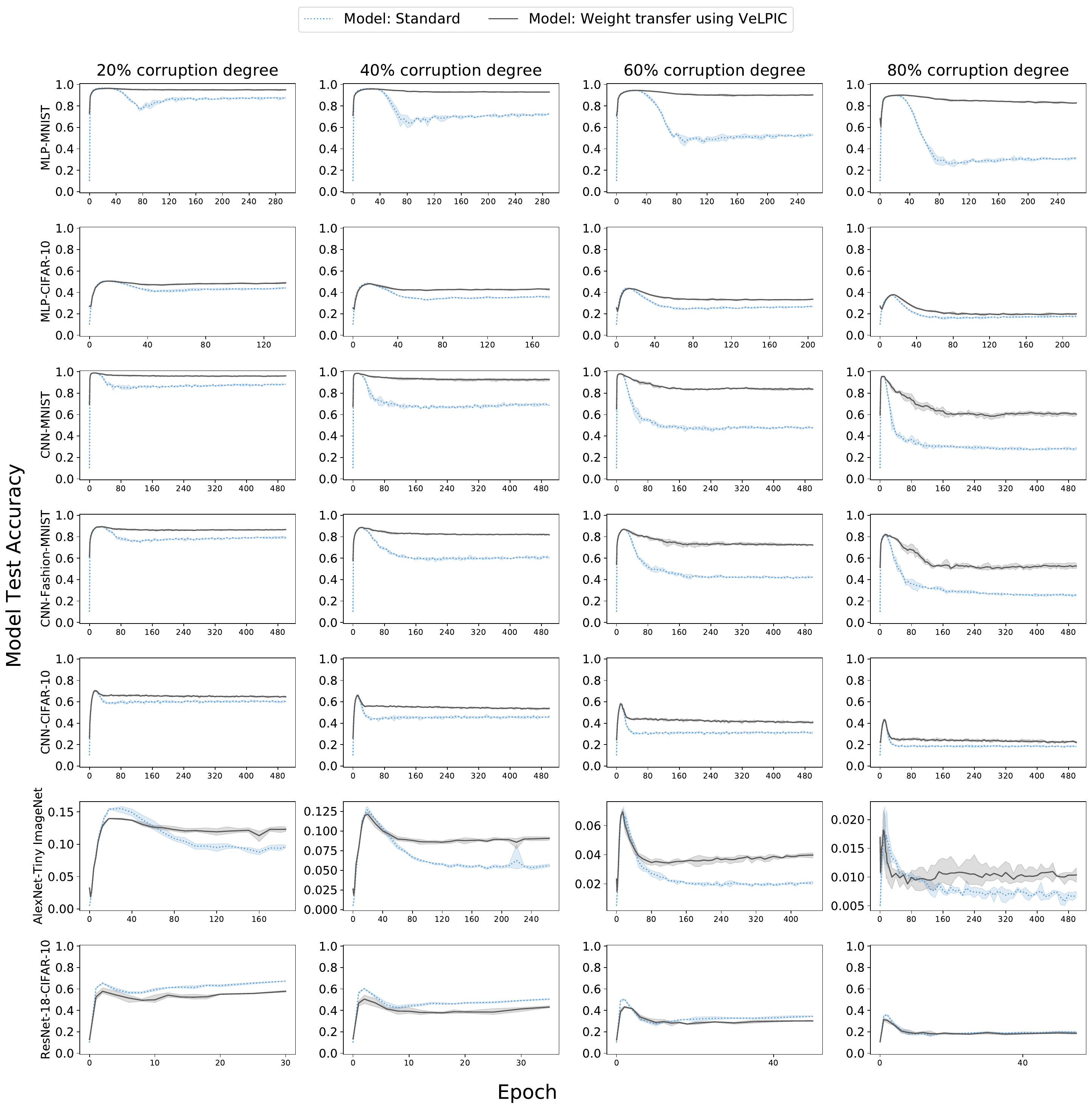}
  \caption{Model test accuracy when the weight intervention is applied to the epoch in question during training.
  % The columns represent the different model-datasets. 
  The test accuracy of the model with standard training  without weight intervention (blue dotted line) is overlaid for comparison.}
  \label{model_weight}
\end{figure*}

% any specific results?

We observe that, in most cases, the weight intervention that replaces pre-softmax weights with the VeLPIC vectors leads to an immediate  \& significant improvement in generalization performance in every epoch of the latter phase of training, matching that of the linear probe, \& in particular, without any further training. A notable exception is for ResNet-18, where, especially for lower corruption degrees the edited model has worse generalization performance. 
For this model, for none of the probes tested, is it the case that later layers manifest better latent generalization than the model, due to which this weight editing technique does not lead to better model generalization either.  An interesting question is if one could develop weight editing techniques that can transfer exactly or approximately probe performance on early layers as well, to the model. In closing, we establish in this section that the latent generalization in memorized models can be directly harnessed in many cases to enhance their test performance, even in the presence of label noise.

\section{Weight intervention during training using VeLPIC}

Having shown that latent generalization can be directly transferred into the model by editing its weights, a natural question arises: What happens if we inject/transfer this latent generalization to the model at a specific epoch and continue standard training thereafter?

We ask whether intervening during training -- by updating the model weights at a specific epoch using information derived from its latent generalization -- can enhance model generalization. Additionally, we investigate how latent generalization (VeLPIC) evolves across different layers throughout training, when we do so.
 
To address these questions, the model is trained with corrupted data for the first 40 epochs using standard training. The intervention is performed at the 40th epoch by replacing the pre-softmax weights with VeLPIC vectors (last layer), as done in the previous section. Standard training is performed for the next 60 epochs using corrupted training data. Model test accuracy on true labels is shown in Figure~\ref{mem_40}, overlaid with the case of no intervention, for comparison. VeLPIC test accuracy during training when this intervention is applied at the 40th epoch for different corruption degrees in Figure~\ref{mem_40_mavcaccuracy} and specifically for 0\% corruption degree in Figure~\ref{mem_40_mavcaccuracy_0}.

\begin{figure*}[h]
  \centering
  \includegraphics[width=\linewidth]{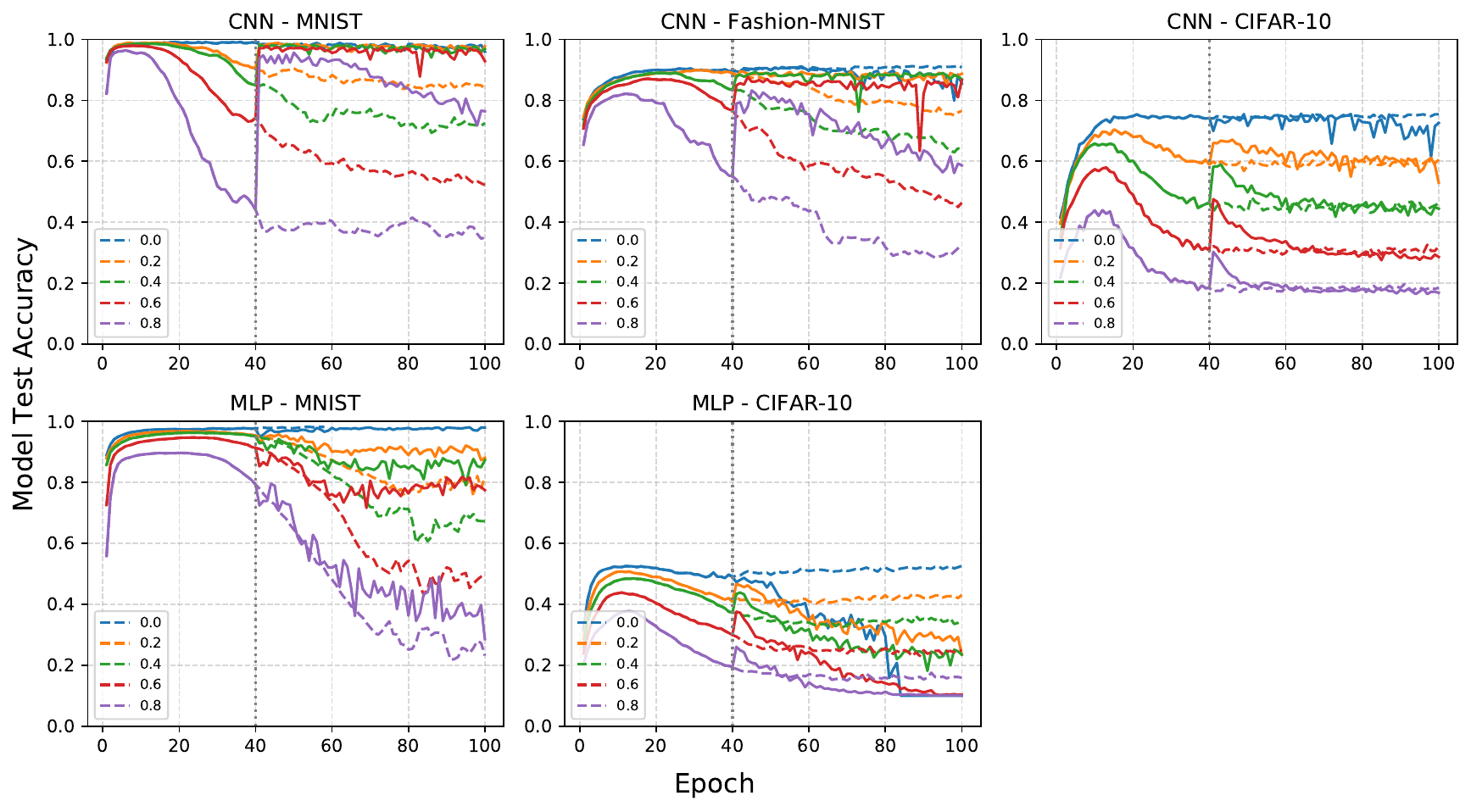}
  \caption{Model test accuracy with true labels during training when a weight intervention that involves replacing pre-softmax weights with the VeLPIC vector, is performed at the 40th epoch and standard training is performed thereafter for 60 epochs. 
  %A model is trained using standard training with corrupted dataset is loaded. Model weights of the pre-softmax layer were replaced with the VeLPIC class vectors and trained for 60 epochs. 
  The corresponding test accuracies for the model with standard training (dotted) without intervention is overlaid for comparison. The results correspond to a single run in each case.}
  \label{mem_40}
\end{figure*}

\begin{figure*}[h]
  \centering
  \includegraphics[width=\linewidth]{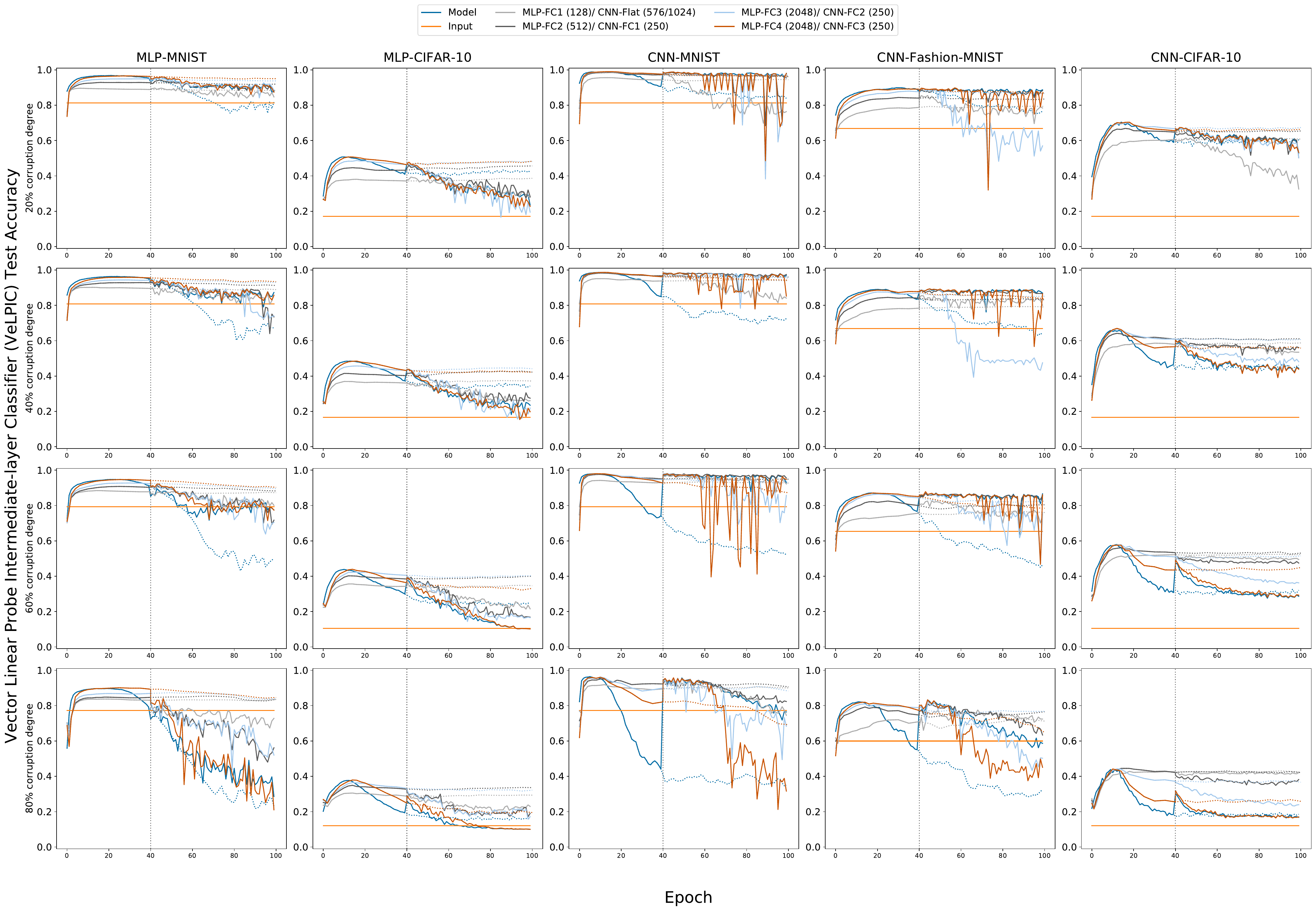}
  \caption{Vector Linear Probe Intermediate layer Classifier (VeLPIC) test accuracy during training on models when intervention is performed at 40th epoch and standard training is performed thereafter for 60 epochs. The VeLPIC test accuracy on models trained with standard training (dotted) without intervention is overlaid for comparison. Model test accuracy for models trained with and without intervention is overlaid for comparison. The results correspond to a single run in each case.}
  \label{mem_40_mavcaccuracy}
\end{figure*}

As expected from results in Section \ref{velpic}, upon applying the intervention at the 40th epoch, there is typically a rise in model generalization (i.e. test accuracy of the model). In many cases, this model generalization degrades subsequently over training, albeit not typically to the degree that model generalization would have degraded without the intervention; there are exceptions. One exception to this is MLP-CIFAR-10 at 80\% corruption degree, wherein the model generalization is ultimately worse post-intervention than in the no-intervention case. On the contrary, CNN-MNIST and CNN-Fashion-MNIST, show little degradation post-intervention for low and modest corruption degrees. Also, e.g. with CNN-CIFAR-10, barring the transient rise in model generalization immediately after the intervention, the model generalization with or without the intervention seem to follow largely similar trajectories. We also plotted accuracies when VeLPIC was applied at each epoch over training for models during the same interventions. Here again results are mixed and largely follow trends of the dynamics of model generalization, post-intervention. However, by-and-large, it appears that applying VeLPIC with standard training, and without intervention, on a suitably chosen layer, usually yields the best generalization at any epoch over training, in comparison to alternatives considered here.

\section{Discussion}

The notion of memorization, where deep networks are able to perfectly learn noisy training data at the expense of generalization has posed a challenge to traditional notions of generalization from Statistical Learning Theory \citep{zhang2017understanding,zhang2021understanding}. Our recent work \citep{ketha2025decoding} demonstrating improved latent generalization in such models is a new development in our understanding of memorization and the nature of representations that drive it. 
Our goal here was to take a deeper dive into this phenomenon, to investigate the origin and dynamics of latent generalization and examine the possibility of directly transferring it to the model. We showed that early-on in training, latent generalization and the model's generalization closely follow each other in many cases, suggesting the possibility of common mechanisms that contribute to both. However, later in training, there is a divergence, with the model often retaining significant latent generalization ability in one or more layers, while sacrificing overt model generalization to a greater degree. Next, we showed that MASC \citep{ketha2025decoding} is a quadratic classifier and built a new linear probe (VeLPIC). We found cases where this linear probe outperforms MASC and also found other cases, in particular, with ResNet-18, where MASC appears to be able to extract significantly more latent generalization from early layers than VeLPIC. 
%Indeed, while \citep{ketha2025decoding} show that MASC applied to at least one layer, in most cases, outperforms the model at the end of training, with respect to generalization, with VeLPIC, we find that, in many models, all layers' latent generalization outperform model generalization. This implies that the latent generalization effect during memorization is more pronounced, in many cases, and more widely present among layer representations than previously reported in \citep{ketha2025decoding}.
% After showing that MASC \citep{ketha2025decoding} is a quadratic classifier, we built a new linear probe (VeLPIC) and found, unexpectedly, that it has better latent generalization performance in comparison to MASC, in many cases. We also did find exceptions, in particular, with ResNet-18, where MASC appears to be able to extract significantly more latent generalization from early layers than VeLPIC. 
We also ran comparisons of MASC and VeLPIC to a baseline linear logistic regression (LR) probe trained using standard techniques and find that MASC and VeLPIC outperform the LR probe in many layers. 
We were also interested in examining if the latent generalization could readily be translated to model generalization by directly editing model weights. 
We utilized the VeLPIC probe to derive a new set of model pre-softmax weights to make this so. We also briefly examined the effect of further training upon so editing the weights in the 40th epoch, and find that this yields mixed results.

There are multiple limitations in this work. While we have carried out extensive experiments on a number of models and datasets, how the phenomenon of latent generalization and its dynamics over training depends on model architecture and nature of the dataset isn't yet clear. Secondly, we have explored in detail the dynamics of MASC, when the class-conditioned subspaces explain 99\% variance of training data. MASC performance for other class-conditioned subspaces remains to be explored. Thirdly, 
 an advantage of MASC lies in its ability to capture generalization when it occurs within a subspace of dimension greater than one. This behavior is particularly evident in the early layers of ResNet-18, where MASC generalizes significantly better than both VeLPIC and the model. In contrast, VeLPIC provides a practical technique that its generalization performance at the final layer can be directly transferred to the model, an extension that is not yet available for MASC. Finally, we haven't explored the phenomenon of latent generalization in transformer models. Memorization has been reported \cite{zhang2017understanding} in cases where input is corrupted instead of class labels. This is a memorization setting that we have not studied here. 

% Finally, we leveraged this understanding to create new kinds of initializations for deep networks and show that they resist memorization in favor of generalization, in many cases.
% These results point to the possibility of the existence of a different part of the loss landscape that is more effective in avoiding memorization, and as such, merits more detailed investigation.
% % These are the first results of their kind that don't use explicit regularization, towards these ends. 

This work brings up multiple new directions for investigation. While we have made some progress, the detailed mechanisms governing latent generalization during memorization remain to be investigated. It is also an open question, whether there exist other probes that can extract better latent generalization from layerwise representations, in comparison to MASC and VeLPIC.
%In particular, contrary to expectations, why is the linear probe (VeLPIC) able to extract significantly better generalization performance than the quadratic probe (MASC) over most layers? 
Next, it is unclear if latent generalization from representations of layers other than the last layer can be transferred towards model generalization. This can be useful to do, in cases where early or middle layers exhibit better latent generalization than the last layer.
%%It is also unclear whether more modern deep network architectures, such as ResNet-18, differ from earlier networks in how they encode information, particularly in terms of linear vs non-linear representations. 
Furthermore, what causes latent generalization during memorization to be or not to be linearly decodable is an open question. To what extent does model architecture, e.g. ResNet-like vs. AlexNet-like influence the answer. Do we need fundamentally different ways to remove effects of memorized labels in these differing network architectures? Answering these questions might need a deeper understanding of representations used by deep networks during memorization. More generally, in light of these results, whether an understanding of generalization in the memorization regime can inform a better understanding of generalization for models trained with uncorrupted labels is a worthwhile direction for future investigation.
%The efforts to remove the effects of memorized labels in these models also required careful investigation.

% Also, we need to examine if our memorization-resistant initializations can be refined, in order to be less brittle. 
% Also, it is worth examining, if the memorization-resistant initializations proposed here can be further refined.
% It remains to be examined why these initializations are extraordinarily effective in fending off memorization, especially in the latter epochs of training. 
% More generally, whether an understanding of generalization in the memorization regime can inform a better understanding of generalization for models trained with uncorrupted labels would be interesting to see. 

% In closing, our results highlight the rich role of representations in driving generalization during memorization, how their understanding can be utilized to directly improve model generalization and design memorization-resistant initializations for deep networks, in the memorization regime.

In closing, our results highlight the rich role of representations in driving generalization during memorization and how their understanding can be utilized, in many cases, to directly improve model generalization.

\subsubsection*{Acknowledgments}

Simran Ketha was supported by an APPCAIR Fellowship, from the Anuradha \& Prashanth Palakurthi Centre for Artificial Intelligence Research. The work was supported in part by an Additional Competitive Research Grant from BITS to Venkatakrishnan Ramaswamy. The authors acknowledge the computing time provided on the High Performance Computing facility, Sharanga, at the Birla Institute of Technology and Science - Pilani, Hyderabad Campus.

\bibliography{main}
\bibliographystyle{tmlr}

\newpage
\appendix

\section{Model architectures and training details}
\label{model_details}

\textbf{MLP Model.} The MLP architecture consists of four hidden layers with 128, 512, 2048, and 2048 units, respectively. Each layer is followed by a $ReLU$ activation, and a $softmax$ layer is used for classification. Models were trained SGD \cite{qian1999momentum} with a learning rate of \(1 \times 10^{-3}\) and momentum 0.9. A batch size of 32 was used across all experiments. Input dataset was normalized by dividing pixel values by 255.

\textbf{CNN Model.} The CNN model\footnote{The convolution network were implemented following the design principals outlined in \citep{tran2022adult}.} is composed of three convolutional blocks, each containing two convolutional layers followed by a max pooling layer. The convolutional layers use 16, 32, and 64 filters, respectively, with kernel size \(3 \times 3\) and stride 1. The max pooling layers have a kernel size of \(2 \times 2\) and stride 1. These blocks are followed by three fully connected layers with 250 units each. ReLU activation is used after all layers except pooling, and softmax is used at the output for classification. The CNN was trained using Adam optimizer \cite{kingma2014adam} with a learning rate of 0.0002. For MNIST and Fashion-MNIST, a batch size of 32 was used, while for CIFAR-10, a batch size of 128 was used. Input data was normalized by subtracting the mean and dividing by the standard deviation of each channel.

ResNet-18 was slightly adapted for CIFAR-10 dataset and trained using $SGD$ (learning rate=0.001, momentum=0.9) with a batch size of 32. Inputs were normalized using channel-wise mean and standard deviation. We analyze six layers: L0–L4, and the average-pooling (avg\_pool) layer. L0 denotes the activation immediately before the first residual block, while L1–L4 correspond to the outputs of successive residual blocks.

The experiments were conducted on servers and workstations equipped with NVIDIA GeForce RTX 3080, RTX 3090, Tesla V100, and Tesla A100 GPUs. The server runs on Rocky Linux 8.10 (Green Obsidian), while the workstation uses Ubuntu 20.04.3 LTS. Memory requirements varied depending on the specific experiments and models.  All model implementations were developed in Python using the PyTorch library, with torch.manual\_seed set to 42 to ensure reproducibility. Accuracy served as the primary evaluation metric throughout this work. 

\section{Minimum Angle Subspace Classifier (MASC)}
\label{MASC}
% We refer the reader to \citep{ketha2025decoding} for a more detailed treatment of MASC.  
% WRITE THAT WE SHOW HERE THE MASC ALGORITHM for understanding ?
% rewrite this

We summarize below the Minimum Angle Subspace Classifier (MASC) from \citep{ketha2025decoding}, in order to keep the exposition here largely self-contained.

% For a given data point \(\boldsymbol{x}\) from the training or test set, a layer output data point \(\boldsymbol{x_l}\) from layer \(l\) when input  \(\boldsymbol{x}\) is passed through the network and its corresponding training subspaces \(\{S_k\}_{k=1}^K\) with $K$ classes,  we use Minimum Angle Subspace Classifier (MASC) Algorithm \ref{algo1} (reproduced verbatim from \citep{ketha2025decoding}) for predicting class labels \(y(\boldsymbol{x_l})\). 

% Given training dataset  \(\mathcal{D}\{(\boldsymbol{x_i}, y_i)\}_{i=1}^m \), where each \(\boldsymbol{x_i} \in \mathbb{R}^d\) and \(y_i \in \{C_k\}_{k=1}^{K}\) are input-label pairs, 
% we estimate training subspaces \(\{S_k\}_{k=1}^K\) for all classes \(K\), for a given layer \(l\) of the neural network using Algorithm \ref{algo2} and \ref{pca_algo} (reproduced verbatim from \citep{ketha2025decoding}). In practice, $S_k$ is represented via its principal components, which form a basis for the subspace. 

For a given deep network, MASC leverages the class-specific geometric structure of network's latent representations.
For an input data point \(\boldsymbol{x}\), let its activation vector at layer \(l\) be denoted by \(\boldsymbol{x_l}\). The objective is to classify \(\boldsymbol{x_l}\) by leveraging a set of class-conditional subspaces, \(\{S_k\}_{k=1}^K\), estimated from a training dataset \(\mathcal{D} = \{(\boldsymbol{x_i}, y_i)\}_{i=1}^m\). To predict the class label \(y(\boldsymbol{x_l})\),  MASC Algorithm~\ref{algo1} (reproduced verbatim from \citep{ketha2025decoding}),  assigns \(\boldsymbol{x_l}\) to the class whose training subspace forms the smallest angle with it. 

The class-conditional subspaces \(\{S_k\}_{k=1}^K\) are estimated from the training dataset \(\mathcal{D} = \{(\boldsymbol{x_i}, y_i)\}_{i=1}^m\), where each \(\boldsymbol{x_i} \in \mathbb{R}^d\) is paired with a label \(y_i \in \{C_k\}_{k=1}^K\). For a given layer \(l\), these subspaces are constructed following Algorithms~\ref{algo2} and~\ref{pca_algo} (reproduced verbatim from \citep{ketha2025decoding}). In practice, each subspace \(S_k\) is represented by its principal components, which provide a compact basis for capturing the underlying class-conditional structure.

\begin{algorithm}[htpb]
\caption{\textbf{Minimum Angle Subspace Classifier (MASC)} (reproduced verbatim from \citep{ketha2025decoding})}
\label{algo1}
  \textbf{Input:}  Training subspaces \(\{S_k\}_{k=1}^K\), layer output data point \(\boldsymbol{x_l}\)  from layer \(l\) when input  \(\boldsymbol{x}\) is passed through the network and  classes \( \{C_k\}_{k=1}^{K}\). \\
  \textbf{Output:}  MASC prediction class label \(y(\boldsymbol{x_l})\) according to layer \(l\) .
  
% \For{data point \(x_l\)}
 \begin{algorithmic}[1]
    \FOR{each class \(C_k\)}
         \STATE \(\boldsymbol{x_{lk}} \longleftarrow \text{ compute  the  projection of } \boldsymbol{x_l} \text{ onto  subspace } S_k\).
        % \STATE  \longleftarrow \text{ transform } x_{lktemp} \text{ back  to  dimension } ld\).
% masc step
        \STATE Compute the angle \(\theta(\boldsymbol{x_l}, \boldsymbol{x_{lk}})\) between \(\boldsymbol{x_l}\) and \(\boldsymbol{x_{lk}}\)
    \ENDFOR
% \EndFor
\STATE Assign the label \(y(\boldsymbol{x_l}) = C_k\) where \(k = \arg \min_{k} \theta(\boldsymbol{x_l}, \boldsymbol{x_{lk}})\)
\STATE \textbf{Return:}  label \(y(\boldsymbol{x_l})\)

\end{algorithmic}
\end{algorithm}

\begin{algorithm}[htpb]
\caption{\textbf{Subspaces Estimator for MASC}\\(reproduced verbatim from \citep{ketha2025decoding})}
\label{algo2}
\textbf{Input:} Training dataset  \(\mathcal{D} \{(\boldsymbol{x_i}, y_i)\}_{i=1}^m  \), where each \(\boldsymbol{x_i} \in \mathbb{R}^d\) and \(y_i \in \{C_k\}_{k=1}^{K}\) are input-label pairs,  neural network, and layer \(l\). \\
\textbf{Output:} Subspaces \(\{S_k\}_{k=1}^K\) for classes \(K\) and given layer \(l\).
\begin{algorithmic}[1]
\STATE \(\mathcal{D}_l = \phi\) 
% \For{each layer \(l\) in \(L\)}
\FOR{each input pair \((\boldsymbol{x_i}, y_i)\) in \(\mathcal{D}\)}
    
     \STATE Pass \(\boldsymbol{x_i}\) through the network layers to obtain the output of layer \(l\), denoted as \(\boldsymbol{x_l} \in \mathbb{R}^{ld}\).
    
    \STATE \(\mathcal{D}_l\)= \(\mathcal{D}_l \cup \{\boldsymbol{(x_l,y_i)}\}\)
    
\ENDFOR

\STATE Estimated subspaces \(\{S_k\}_{k=1}^{K} \longleftarrow \text{\textbf{PCA-Based Subspace Estimation}}(\mathcal{D}_l\))
\STATE \textbf{Return:} Subspaces \(\{S_k\}_{k=1}^K\) 
\end{algorithmic}
\end{algorithm}

% \FloatBarrier
\begin{algorithm}[htpb]
\caption{\textbf{PCA-Based Subspace Estimation}\\(reproduced verbatim from \citep{ketha2025decoding})}
\label{pca_algo}
\textbf{Input:} Layer output \(\mathcal{D}_l = \{(\boldsymbol{x_i}, y_i)\}_{i=1}^m\), where \(\boldsymbol{x_l} \in \mathbb{R}^{ld}\) and \(y_i \in \{C_k\}_{k=1}^{K}\). \\
\textbf{Output:} Subspaces \(\{S_k\}_{k=1}^K\) for classes \(K\).
\begin{algorithmic}[1]

\STATE  \(\mathcal{D}_{\text{new}} \gets \mathcal{D}_l\) 
\FOR{each \(\boldsymbol{(x_i,y_i)} \in   \mathcal{D}_l\)}  

    \STATE \(\mathcal{D}_{\text{new}} \gets \mathcal{D}_{\text{new}} \cup \{\boldsymbol{(-x_i,y_i})\}\)
\ENDFOR

% \State Add \(\boldsymbol{-x_l}\) to the new dataset \(\mathcal{D}_{\text{new}} \gets \mathcal{D}_l \cup \{\boldsymbol{-x_l\}}\)

\FOR{each $k \in \{1,\ldots,K\}$ }
    \STATE Extract the subset of data \(\mathcal{D}_{\text{new}, k} = \{\boldsymbol{x_i} \mid y_i = C_k\}\)
    \STATE  \(S_k\) =PCA(\(\mathcal{D}_{\text{new}, k} \))
    %\STATE Apply PCA to \(\mathcal{D}_{\text{new}, k}\) to calculate the PCA components
\ENDFOR

\STATE \textbf{Return:} Subspaces \(\{S_k\}_{k=1}^K\) 
\end{algorithmic}
\end{algorithm}
 
\newpage
\subsection{Proof of equivalence of MASC and classifier used in Proposition \ref{prop1} of Section \ref{non_linear_masc}}
\label{expanded_proof}

In Proposition \ref{prop1} of Section \ref{non_linear_masc}, we show that MASC is a quadratic classifier. However, we used a slightly different formulation of MASC in Proposition \ref{prop1}. Here, we show that, that formulation is equivalent to the formulation of MASC from \citep{ketha2025decoding}.

%\begin{proof}
Let \(\boldsymbol{x_{l}}\) denote the output of the layer $l$ of the deep network when it is given input  \(\boldsymbol{x}\). Let \(\boldsymbol{p_1^c}, \boldsymbol{p_2^c}, \ldots, \boldsymbol{p_k^c}\) be an orthonormal basis\footnote{which is typically estimated via PCA, where $k$ is the number of principal components.} of the subspace ${\cal S}_c$ corresponding to class $c$. Let $\boldsymbol{x_l^c}$ be the projection of $\boldsymbol{x_l}$ on ${\cal S}_c$. We have
\begin{equation}\label{xcl}
   \boldsymbol{x_l^c} = (\boldsymbol{x_l} \cdot \boldsymbol{p_1^c}) ~ \boldsymbol{p_1^c} + \ldots +(\boldsymbol{x_l} \cdot \boldsymbol{p_k^c}) ~\boldsymbol{p_k^c}
\end{equation}

\begin{equation}
   \boldsymbol{\hat{x_l^c}} = \frac{\boldsymbol{{x_l^c}}}{\boldsymbol{|{x_l^c}|}}
\end{equation}

MASC\citep{ketha2025decoding} on layer $l$ predicts the label of \(\boldsymbol{x}\) as 

\begin{equation} \label{orignalmasc}
    \operatorname*{arg\,max}_c \left( \boldsymbol{\hat{x_l}} . \boldsymbol{\hat{x_l^c}} \right) 
\end{equation}

The formulation in Proposition \ref{prop1} of Section \ref{non_linear_masc}  predicts the label of \(\boldsymbol{x}\) as 

\begin{equation} \label{orignalmasc}
    \operatorname*{arg\,max}_c \left( \boldsymbol{x_l} . \boldsymbol{x_l^c} \right) 
\end{equation}

\begin{lem}
   
    $\operatorname*{arg\,max}_c \left( \boldsymbol{\hat{x_l}} . \boldsymbol{\hat{x_l^c}} \right) = \operatorname*{arg\,max}_c \left( \boldsymbol{x_l} . \boldsymbol{x_l^c} \right) $

\end{lem}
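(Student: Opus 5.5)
The plan is to show that, for a fixed input $\boldsymbol{x_l}\neq \boldsymbol{0}$, each class-wise score in the cosine formulation is a fixed monotone transform of the corresponding score in the formulation of Proposition \ref{prop1}. The transform does not depend on $c$, so the two argmaxes coincide. The key fact is that $\boldsymbol{x_l^c}$ is an orthogonal projection. Write $P_c=\sum_{i=1}^k \boldsymbol{p_i^c}(\boldsymbol{p_i^c})^\top$, which is symmetric and idempotent. Then
\[
\boldsymbol{x_l}\cdot\boldsymbol{x_l^c}=\boldsymbol{x_l}^\top P_c\boldsymbol{x_l}=(P_c\boldsymbol{x_l})^\top(P_c\boldsymbol{x_l})=|\boldsymbol{x_l^c}|^2=\sum_{i=1}^k(\boldsymbol{x_l}\cdot\boldsymbol{p_i^c})^2\ge 0 .
\]
The same identity also gives the right-hand side of equation \ref{main_2}.

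Next I expand the cosine score. Using the identity above,
\[
\boldsymbol{\hat{x_l}}\cdot\boldsymbol{\hat{x_l^c}}=\frac{\boldsymbol{x_l}\cdot\boldsymbol{x_l^c}}{|\boldsymbol{x_l}|\,|\boldsymbol{x_l^c}|}=\frac{|\boldsymbol{x_l^c}|^2}{|\boldsymbol{x_l}|\,|\boldsymbol{x_l^c}|}=\frac{|\boldsymbol{x_l^c}|}{|\boldsymbol{x_l}|}=\sqrt{\frac{\boldsymbol{x_l}\cdot\boldsymbol{x_l^c}}{|\boldsymbol{x_l}|^2}} .
\]
So the cosine score equals $F(s_c)$, where $s_c=\boldsymbol{x_l}\cdot\boldsymbol{x_l^c}$ and $F(s)=\sqrt{s}/|\boldsymbol{x_l}|$. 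This $F$ is strictly increasing on $[0,\infty)$, and $|\boldsymbol{x_l}|>0$ is the same for every class. A strictly increasing map preserves the ordering of the scores $\{s_c\}_c$, so the sets of maximizers agree, including any ties.

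The main obstacle is the degenerate cases where a normalization is undefined. If $\boldsymbol{x_l}=\boldsymbol{0}$, every score in both formulations is undefined or zero, and I will note that the statement is vacuous or trivially true under a fixed tie-breaking convention. If $\boldsymbol{x_l}\perp{\cal S}_c$ for some $c$, then $\boldsymbol{x_l^c}=\boldsymbol{0}$ and $\boldsymbol{\hat{x_l^c}}$ is undefined. I will adopt the natural convention that the cosine in that case is $0$, i.e.\ the angle is $90^\circ$, as in Algorithm \ref{algo1}. This convention matches $F(0)=0$, so the monotone correspondence still holds on all of $[0,\infty)$ and the argument goes through unchanged. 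I will also remark that maximizing the cosine is the same as minimizing the angle in Algorithm \ref{algo1}, since $\arccos$ is decreasing on $[0,1]$.
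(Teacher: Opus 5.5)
Your proposal is correct and follows essentially the same route as the paper. Both arguments show that the cosine score equals $|\boldsymbol{x_l^c}|/|\boldsymbol{x_l}| = \sqrt{\cos^2\theta_1^{\,c}+\cdots+\cos^2\theta_k^{\,c}}$, and that the dot-product score is $|\boldsymbol{x_l}|^2$ times its square; the conclusion then follows by monotonicity with a class-independent positive factor. Your projection-matrix identity reaches this a bit more directly than the paper's expansion via the angles to the basis vectors, and your handling of the degenerate cases $\boldsymbol{x_l}=\boldsymbol{0}$ and $\boldsymbol{x_l}\perp{\cal S}_c$, which the paper ignores, is a worthwhile addition.
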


\begin{proof}
As  $\boldsymbol{p_1^c}, \ldots, \cdot \boldsymbol{p_k^c}$ are orthonormal, therefore 
\begin{equation} 
   \boldsymbol{|x_l^c|} = \sqrt{(\boldsymbol{x_l} \cdot \boldsymbol{p_1^c})^2 + \ldots +(\boldsymbol{x_l} \cdot \boldsymbol{p_k^c})^2}
\end{equation}

After expanding the dot product and noting that $|\boldsymbol{p_1^c}|,\ldots, |\boldsymbol{p_k^c}| =1 $, we have

\begin{equation} \label{norm_xcl}
   \boldsymbol{|x_l^c|} 
= |\boldsymbol{x_l}| \sqrt{\cos^2 \theta_1^{\,c} + \cdots + \cos^2 \theta_k^{\,c}}
\end{equation}

where $\theta_i^{\,c}$ is the angle between $\boldsymbol{x_l}$ and $\boldsymbol{p_i^c}$.

MASC\citep{ketha2025decoding} on layer $l$ predicts the label of \(\boldsymbol{x}\) as 

\begin{equation} \label{orignalmasc}
    \operatorname*{arg\,max}_c \left( \boldsymbol{\hat{x_l}} . \boldsymbol{\hat{x_l^c}} \right) = 
    \operatorname*{arg\,max}_c \left( \frac{(\boldsymbol{x_l} \cdot \boldsymbol{p_1^c})^2 + \ldots +(\boldsymbol{x_l} \cdot \boldsymbol{p_k^c})^2)}
    {|\boldsymbol{x_l}| |\boldsymbol{x_l^c}|}\right)
\end{equation}

We now substitute \ref{norm_xcl} in \ref{orignalmasc}
\begin{equation} 
    \operatorname*{arg\,max}_c \left(\frac{(\boldsymbol{x_l} \cdot \boldsymbol{p_1^c})^2 + \ldots +(\boldsymbol{x_l} \cdot \boldsymbol{p_k^c})^2)}
    {|\boldsymbol{x_l}| |\boldsymbol{x_l}| \sqrt{\cos^2 \theta_1^{\,c} + \cdots + \cos^2 \theta_k^{\,c}}}\right)
\end{equation}

After expanding the numerator and noting that $|\boldsymbol{p_1^c}|,\ldots, |\boldsymbol{p_k^c}| =1 $, we have

\begin{equation} 
    \operatorname*{arg\,max}_c \left(\frac{|\boldsymbol{x_l}|^2 \cos^2 \theta_1^{\,c} + \cdots + {|\boldsymbol{x_l}|^2 \cos^2 \theta_k^{\,c}}}
    {|\boldsymbol{x_l}|^2 \sqrt{\cos^2 \theta_1^{\,c} + \cdots + \cos^2 \theta_k^{\,c}}}\right)
\end{equation}

Simplifying, we now have 
\begin{equation} \label{final_left} 
    \operatorname*{arg\,max}_c \left( \boldsymbol{\hat{x_l}} . \boldsymbol{\hat{x_l^c}} \right) = \operatorname*{arg\,max}_c \left(\sqrt{\cos^2 \theta_1^{\,c} + \cdots + \cos^2 \theta_k^{\,c}} \right)
\end{equation}

Now,

\begin{equation} 
    \operatorname*{arg\,max}_c \left( \boldsymbol{x_l} . \boldsymbol{x_l^c} \right) = \operatorname*{arg\,max}_c \left( (\boldsymbol{x_l} \cdot \boldsymbol{p_1^c})^2 + \ldots +(\boldsymbol{x_l} \cdot \boldsymbol{p_k^c})^2)\right) 
\end{equation}

After expanding and noting that $|\boldsymbol{p_1^c}|,\ldots, |\boldsymbol{p_k^c}| =1 $, we have

\begin{equation} 
    \operatorname*{arg\,max}_c \left(|\boldsymbol{x_l}|^2 \cos^2 \theta_1^{\,c} + \cdots + {|\boldsymbol{x_l}|^2 \cos^2 \theta_k^{\,c}}\right)
\end{equation}

$|\boldsymbol{x_l}|^2$ is taken common and is not dependent on $c$, due to which, equivalently, we have

\begin{equation}\label{final_right} 
    \operatorname*{arg\,max}_c \left( \boldsymbol{x_l} . \boldsymbol{x_l^c} \right) = \operatorname*{arg\,max}_c \left( \left( \cos^2 \theta_1^{\,c} + \cdots +  \cos^2 \theta_k^{\,c}\right) \right)
\end{equation}

From \ref{final_left} and  \ref{final_right}, it follows that  
% \begin{equation}
%     \operatorname*{arg\,max}_c \left(\sqrt{\cos^2 \theta_1^{\,c} + \cdots + \cos^2 \theta_k^{\,c}} \right)=
%     \operatorname*{arg\,max}_c \left(  \left( \cos^2 \theta_1^{\,c} + \cdots +  \cos^2 \theta_k^{\,c}\right) \right)
% \end{equation}

\begin{equation}
    \operatorname*{arg\,max}_c \left( \boldsymbol{\hat{x_l}} . \boldsymbol{\hat{x_l^c}} \right) = \operatorname*{arg\,max}_c \left( \boldsymbol{x_l} . \boldsymbol{x_l^c} \right) 
\end{equation}
\end{proof}

% \arg \max_{c} 

% \section{Experiment results with additional models}
\section{Training dynamics of latent generalization
using MASC}
\label{during_masc}

% MASC testing accuracy during training for MLP trained on MNIST, CNN trained on Fashion-MNIST, AlexNet trained on Tiny ImageNet and ResNet-18 trained on CIFAR-10 with 0\% and 100\% corruption degrees are shown in Figure~\ref{main_0.99_0}. In the absence of label corruption, the MASC accuracy of the final fully connected layers (MLP-FC4 (2048 units) and CNN-FC3 (250 units)) closely matched the corresponding model test accuracies. Interestingly, in certain cases, such as AlexNet trained on Tiny ImageNet (FC1 and FC2, each with 4096 units) and MLP-CIFAR-10 (FC3 with 2048 units), the MASC accuracy even surpassed the model's test accuracy. 

MASC testing accuracy during training with 0\% and 100\% corruption degrees are shown in Figure~\ref{main_0.99_0}.

\begin{figure}[h]
\centering
\includegraphics[width=0.99\linewidth]{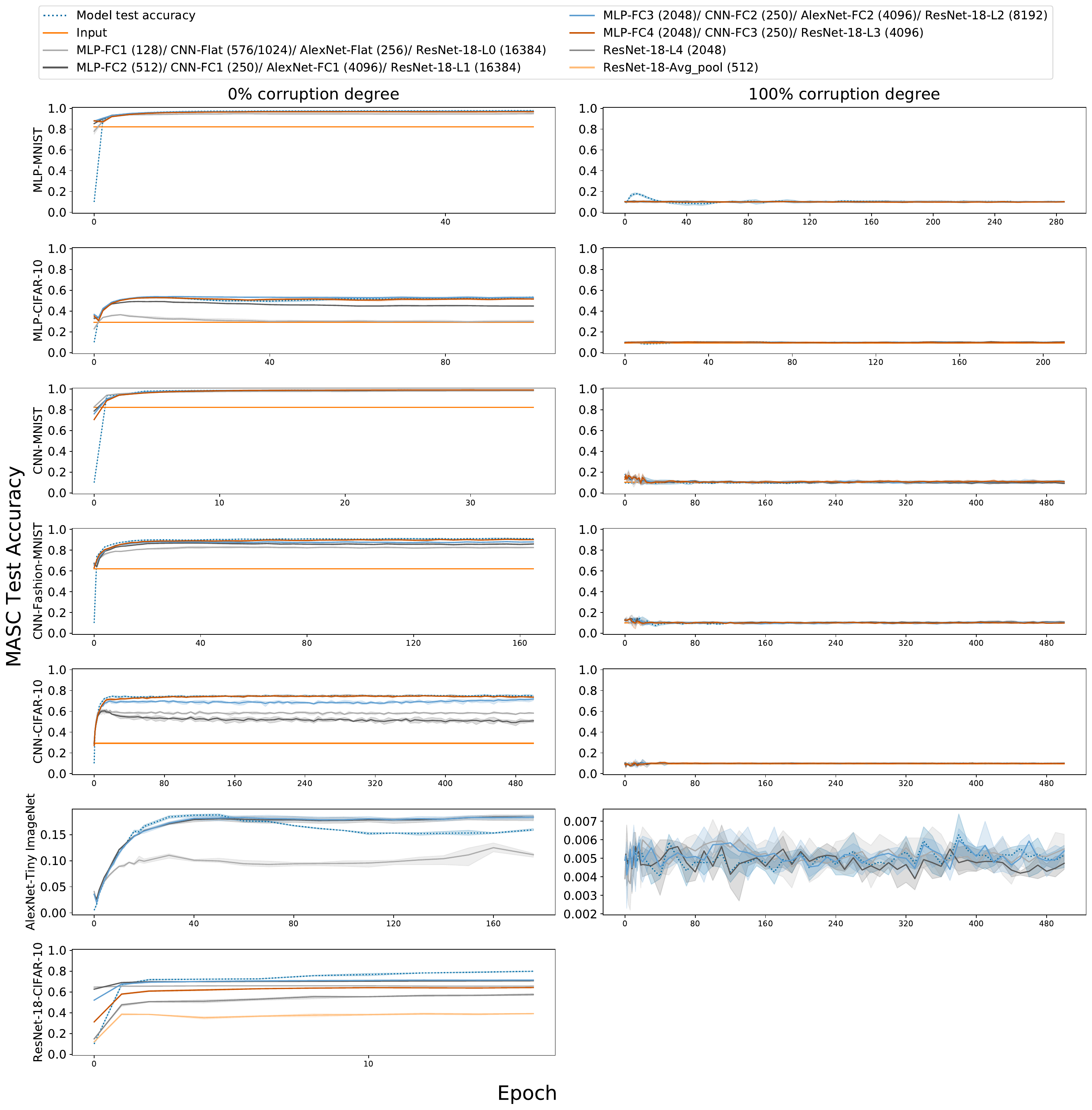}
\caption{Minimum Angle Subspace Classifier (MASC) Test accuracy for 0\% and 100\% corruption degrees during training of the network, where test data is projected onto class-specific subspaces constructed from training data with the indicated label corruption degrees. The plots display MASC accuracy across different layers of the network for various model–dataset combinations. For reference, the test accuracy of the models (dotted line) is also shown. Each row corresponds to a specific corruption degree, while columns represent different models, as labeled. FC denotes fully connected layers with  $ReLU$  activation, and Flat refers to the flatten layer without $ReLU$.}
\label{main_0.99_0}
\end{figure}

% \begin{figure}[h]
% \centering
% \includegraphics[width=0.99\linewidth]{images/ACCURACY_corrupt_Testing_0.99_extra_all.pdf}
% \caption{MASC accuracy during training of the network, where test data is projected onto class-specific subspaces constructed from training data with the indicated label corruption degrees. The plots display MASC accuracy across different layers of the network for various model–dataset combinations. For reference, the test accuracy of the models (dotted line) is also shown. Each row corresponds to a specific corruption degree, while columns represent different models, as labeled. FC denotes fully connected layers with  $ReLU$  activation, and Flat refers to the flatten layer without $ReLU$.}
% \label{supp_0.99}
% \end{figure}

\section{Training dynamics of the linear probe: VeLPIC}
\label{during_velpic}

A linear probe -- VeLPIC -- test accuracy during training for all models with 0\% and 100\% corruption degrees are shown in Figure~\ref{main_1_0}.

\begin{figure}[h]
\centering
\includegraphics[width=0.99\linewidth]{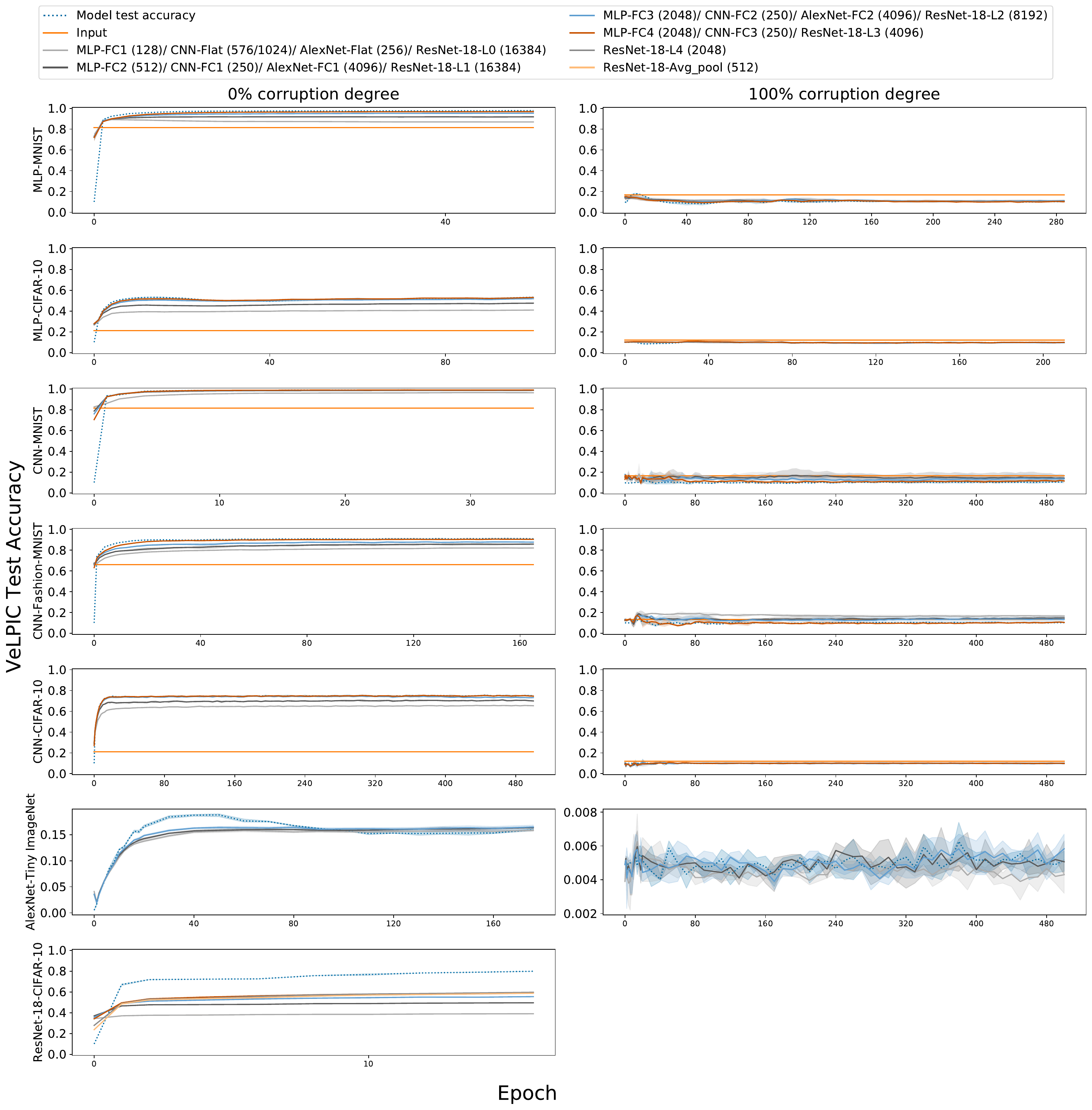}
\caption{Vector Linear Probe Intermediate-layer Classifier (VeLPIC) test accuracy for 0\% and 100\% corruption degrees during training of the network, where test data is projected onto class vectors constructed at each epoch from training data with the indicated label corruption degrees. The plots display VeLPIC accuracy across different layers of the network for various model–dataset combinations. For reference, the test accuracy of the models (blue dotted line) over epochs of training is also shown. FC denotes fully connected layers with $ReLU$ activation, and Flat refers to the flatten layer without $ReLU$.}
\label{main_1_0}
\end{figure}

\subsection{Difference between VeLPIC and MASC}
\label{diff_vel}

Here, we present the difference between test accuracy of VeLPIC and MASC during training and for different layer of the networks. For MLP-MNIST,MLP-CIFAR-10, CNN-MNIST, CNN-Fashion-MNIST, CNN-CIFAR-10, AlexNet-Tiny ImageNet and ResNet-18-CIFAR-10, these results for 0\% and 100\% corruption degrees are shown in Figure~\ref{diff_vel_masc_1_0}.

\begin{figure}[h]
\centering
\includegraphics[width=0.99\linewidth]{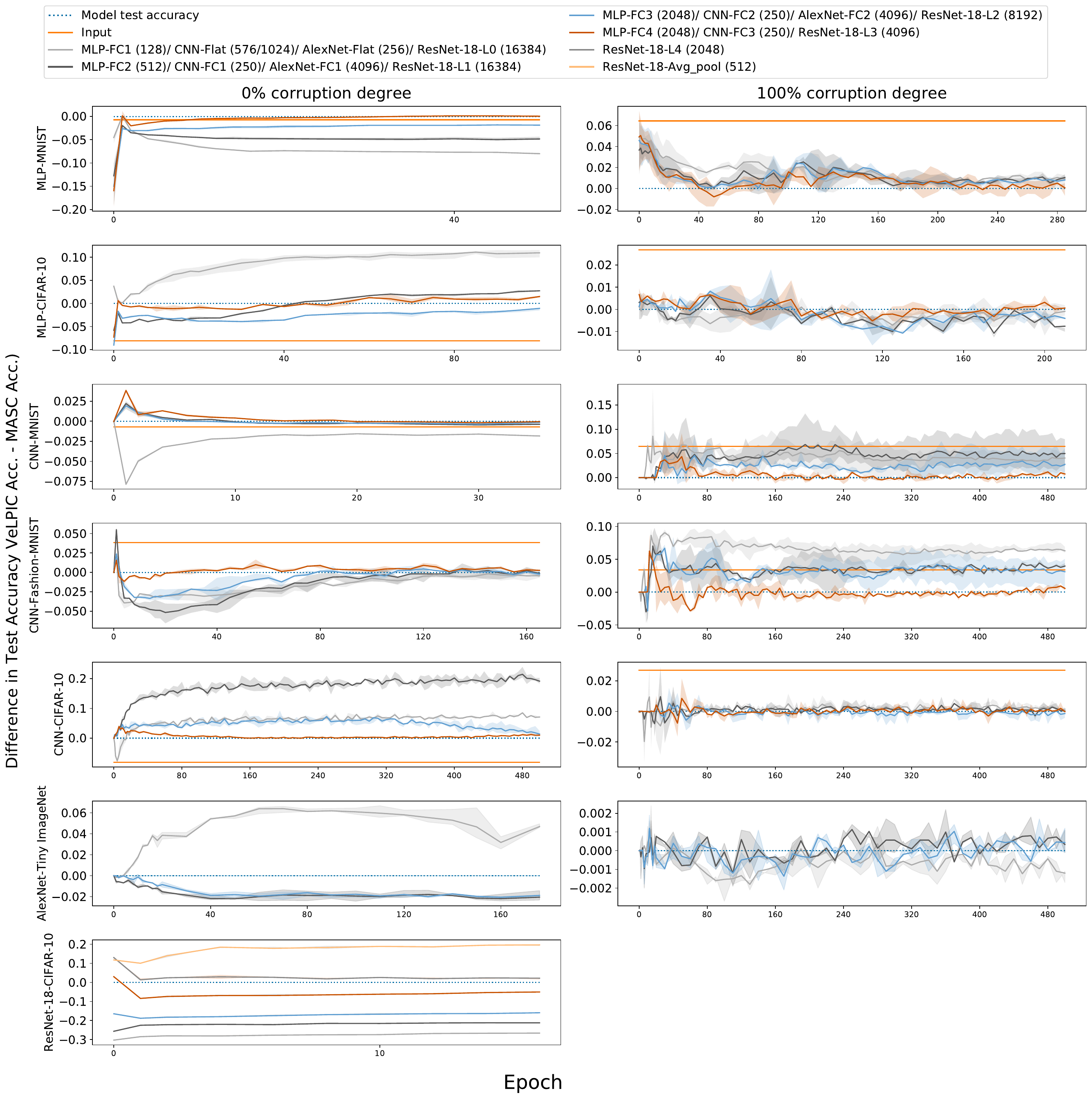}
\caption{Difference in test accuracy (VeLPIC Accuracy -  MASC Accuracy) during training of the network, where test data is projected onto class vectors constructed at each epoch from training data with the indicated label corruption degrees. The plots display difference in accuracy across different layers of the network for various model–dataset combinations. For reference, the test accuracy of the models (blue dotted line) over epochs of training is also shown, which would be 0.}
\label{diff_vel_masc_1_0}
\end{figure}

\section{Additional results with linear probe (logistic regression)}
\label{lg_additional}

For all models with 0\% corruption degree, the test accuracy of the logistic regression probe during training of the memorized models, are shown in  Figure~\ref{log_reg_0}. The results comparing the performance of logistic regression probe with MASC probe is shown in Figures~\ref{diff_masc_lg_0} and with VeLPIC probe is shown in  Figures~\ref{diff_velpic_lg_0}.

\begin{figure*}[h]
  \centering
  \includegraphics[width=\linewidth]{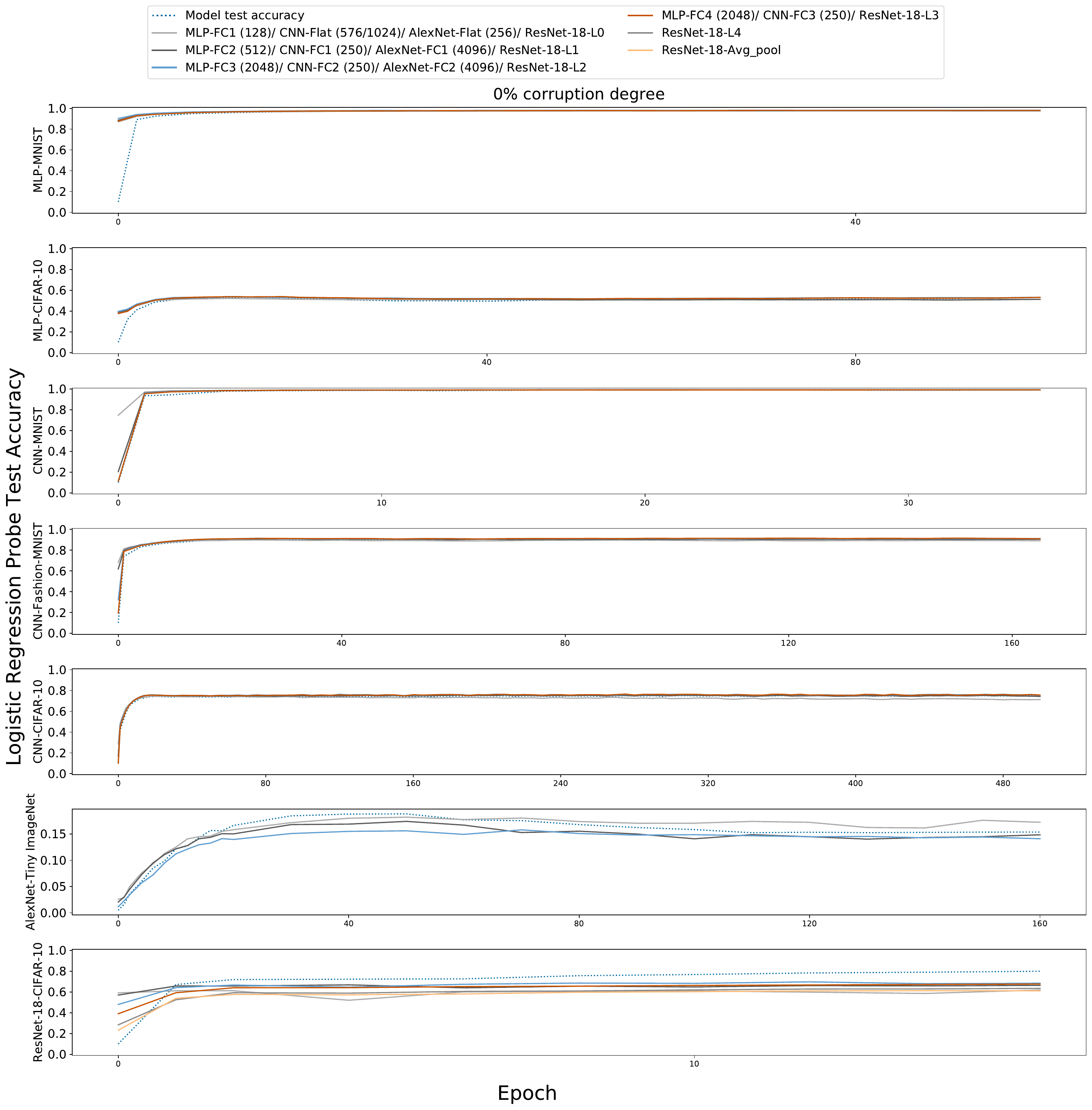}
  \caption{Logistic regression probe's test accuracy  over epochs of  training for multiple models/datasets. The plots display logistic regression probe's accuracy across different layers of the network. For reference, the evolution of test accuracy of the corresponding model (blue dotted line) over epochs of training is also shown.
 }
  \label{log_reg_0}
\end{figure*}

%  different test accuarcy masc and lg 0%
\begin{figure*}[h]
  \centering
  \includegraphics[width=\linewidth]{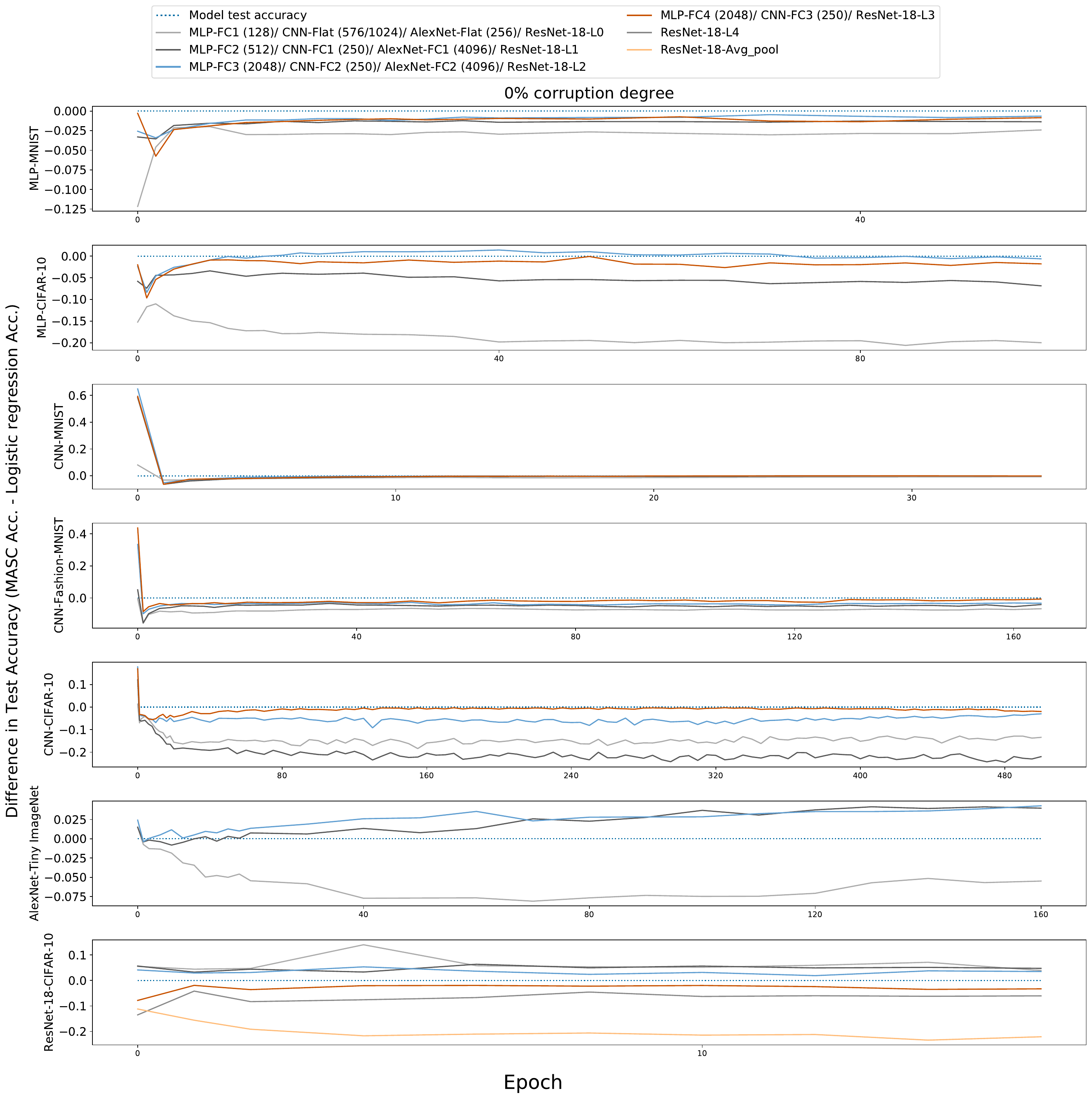}
  \caption{Difference in test accuracy (MASC Accuracy -  Logistic regression probe Accuracy) during training of the network, where for MASC test data is projected onto class vectors constructed at each epoch from training data with the indicated label corruption degrees. The plots display difference in accuracy across different layers of the network for various model–dataset combinations. For reference, the test accuracy of the models (blue dotted line) over epochs of training is also shown, which would be 0.
 }
  \label{diff_masc_lg_0}
\end{figure*}

%  different test accuarcy velpic and lg 0%
\begin{figure*}[h]
  \centering
  \includegraphics[width=\linewidth]{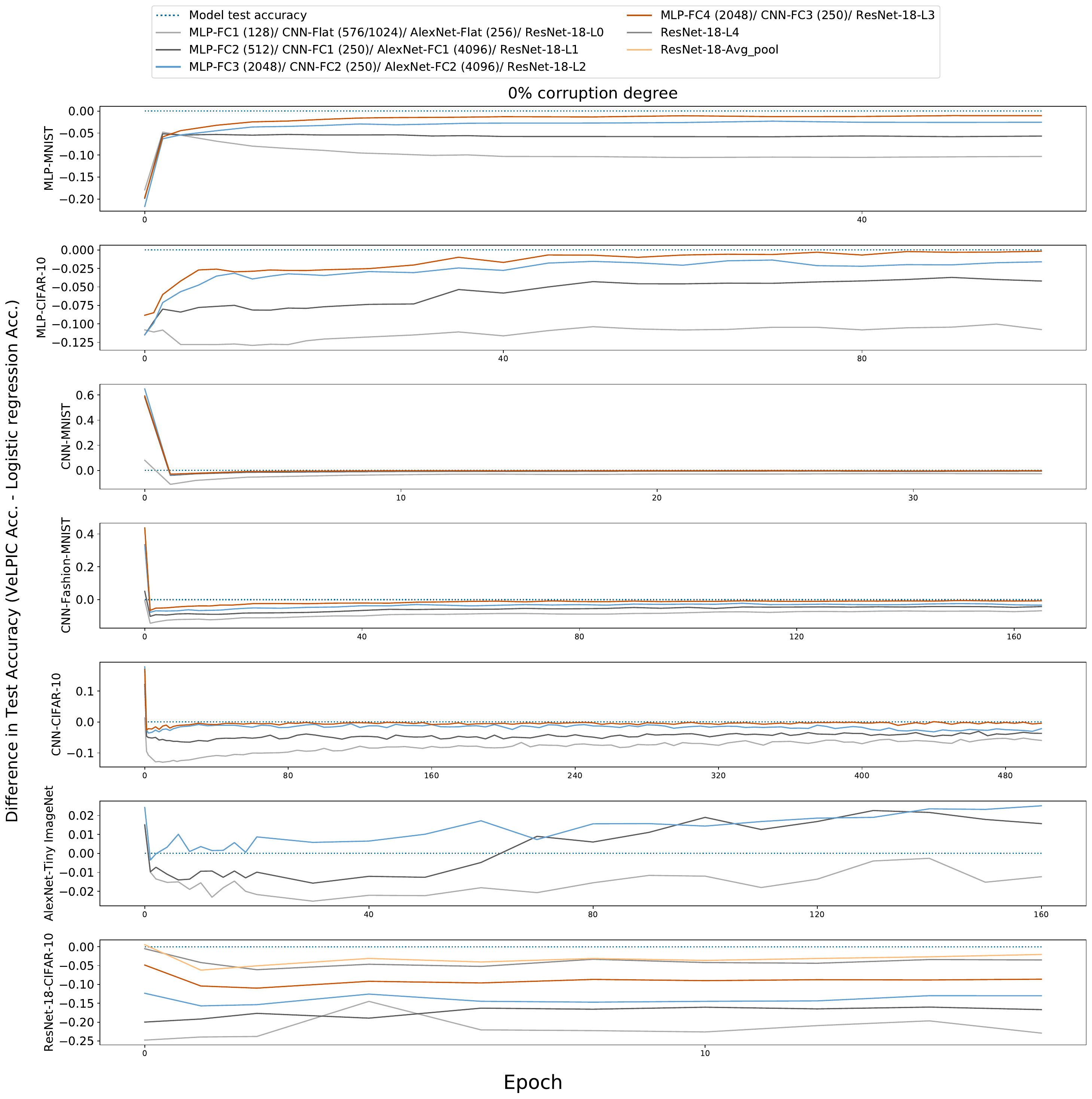}
  \caption{Difference in test accuracy (VELPIC Accuracy -  Logistic regression probe Accuracy) during training of the network, where for VELPIC test data is projected onto class vectors constructed at each epoch from training data with the indicated label corruption degrees. The plots display difference in accuracy across different layers of the network for various model–dataset combinations. For reference, the test accuracy of the models (blue dotted line) over epochs of training is also shown, which would be 0.
 }
  \label{diff_velpic_lg_0}
\end{figure*}

\section{Transferring latent generalization to model
generalization}
\label{transfer}
For all models with 0\% and 100\% corruption degrees, model test accuracy during training when we replace the pre-softmax weights with VeLPIC vectors are shown in Figure~\ref{model_weight_1_0}. 
Model corrupted training accuracy for a few models-dataset-corruption are plotted in Figure~\ref{supp_train_weight_1} and Figure~\ref{supp_train_weight_2}.

\begin{figure}[h]
  \centering
  \includegraphics[width=\linewidth]{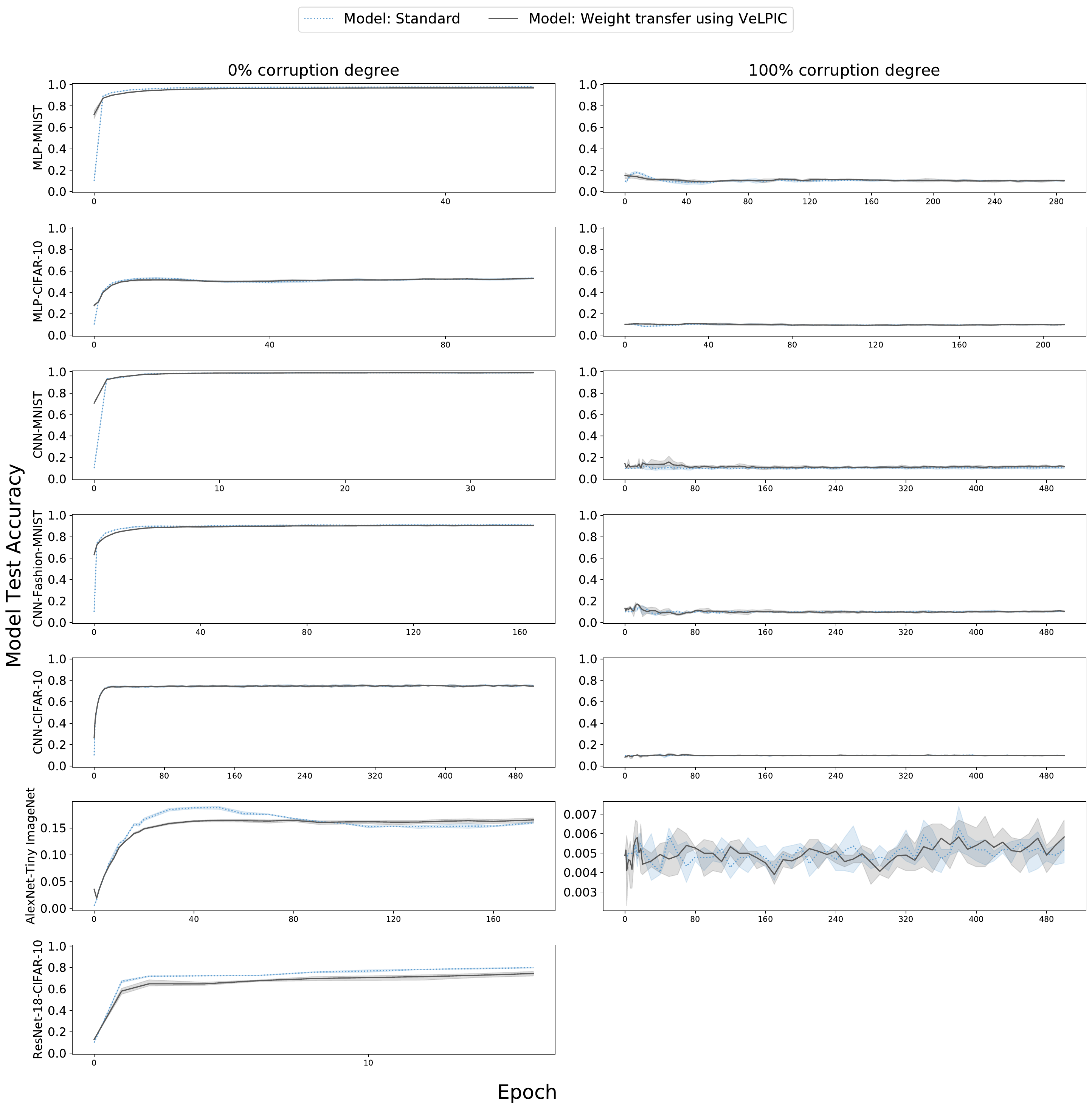}
  \caption{Comparing model test accuracy  with VeLPIC transferred accuracy when the weight intervention is applied to the model at the epoch in question during training for corruption degrees 0\% and 100\%. The test accuracy of the model with standard training without weight intervention (blue dotted line) is overlaid for comparison.}
  \label{model_weight_1_0}
\end{figure}

\begin{figure}[h]
  \centering
  \includegraphics[width=0.60\linewidth]{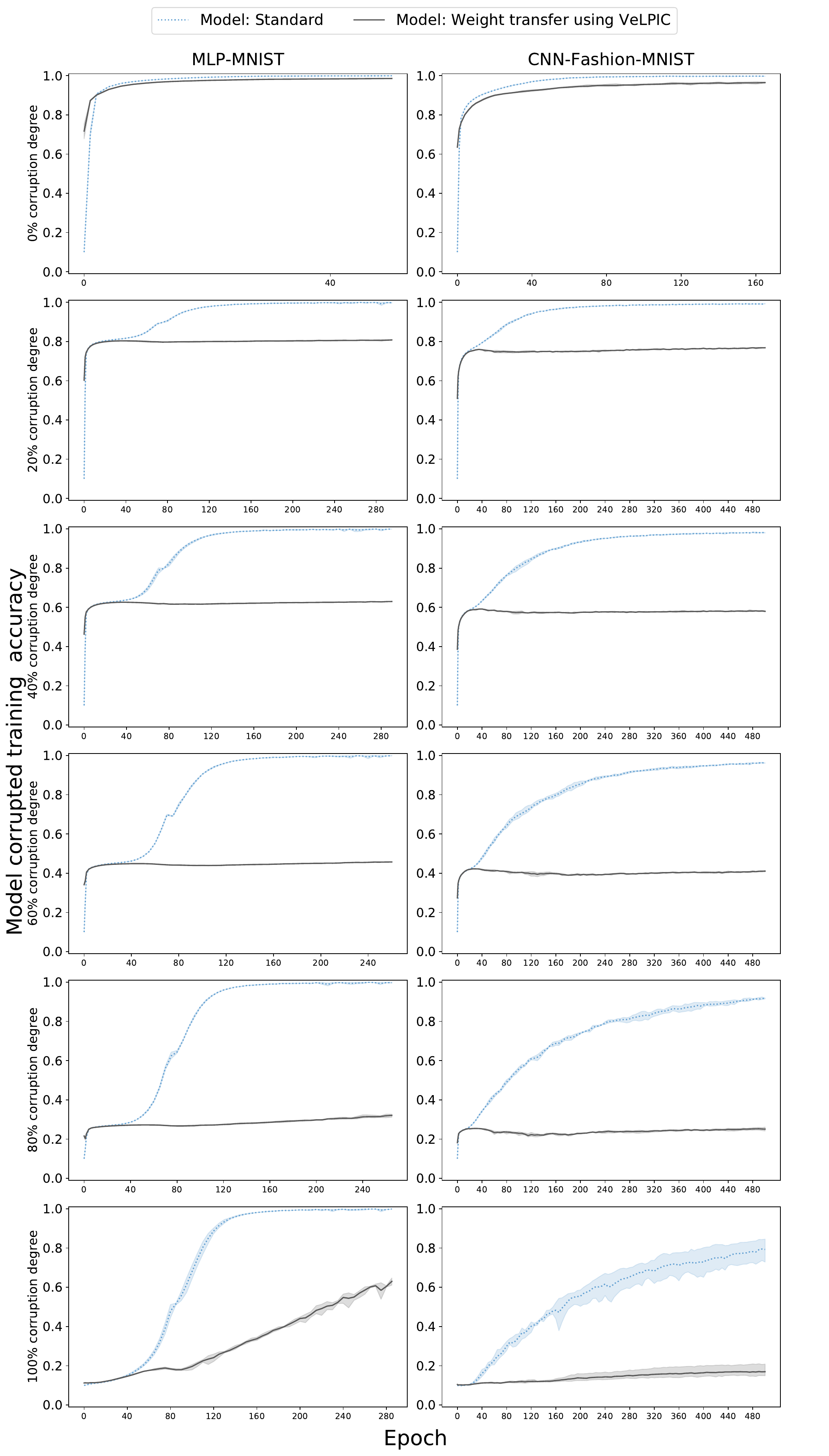}
  \caption{Model train accuracy on corrupted dataset when the VeLPIC weight intervention is applied to pre-softmax weights at the epoch in question during training. The training accuracy on corrupted dataset of the model with standard training without weight intervention (blue dotted line) is overlaid for comparison. Observe that, except for 100\% corruption degree,  the transferred training accuracy tends to saturate at a level largely consistent with the fraction of true training labels in the corrupted dataset.}
  \label{supp_train_weight_1}
\end{figure}

\begin{figure}[h]
  \centering
  \includegraphics[width=\linewidth]{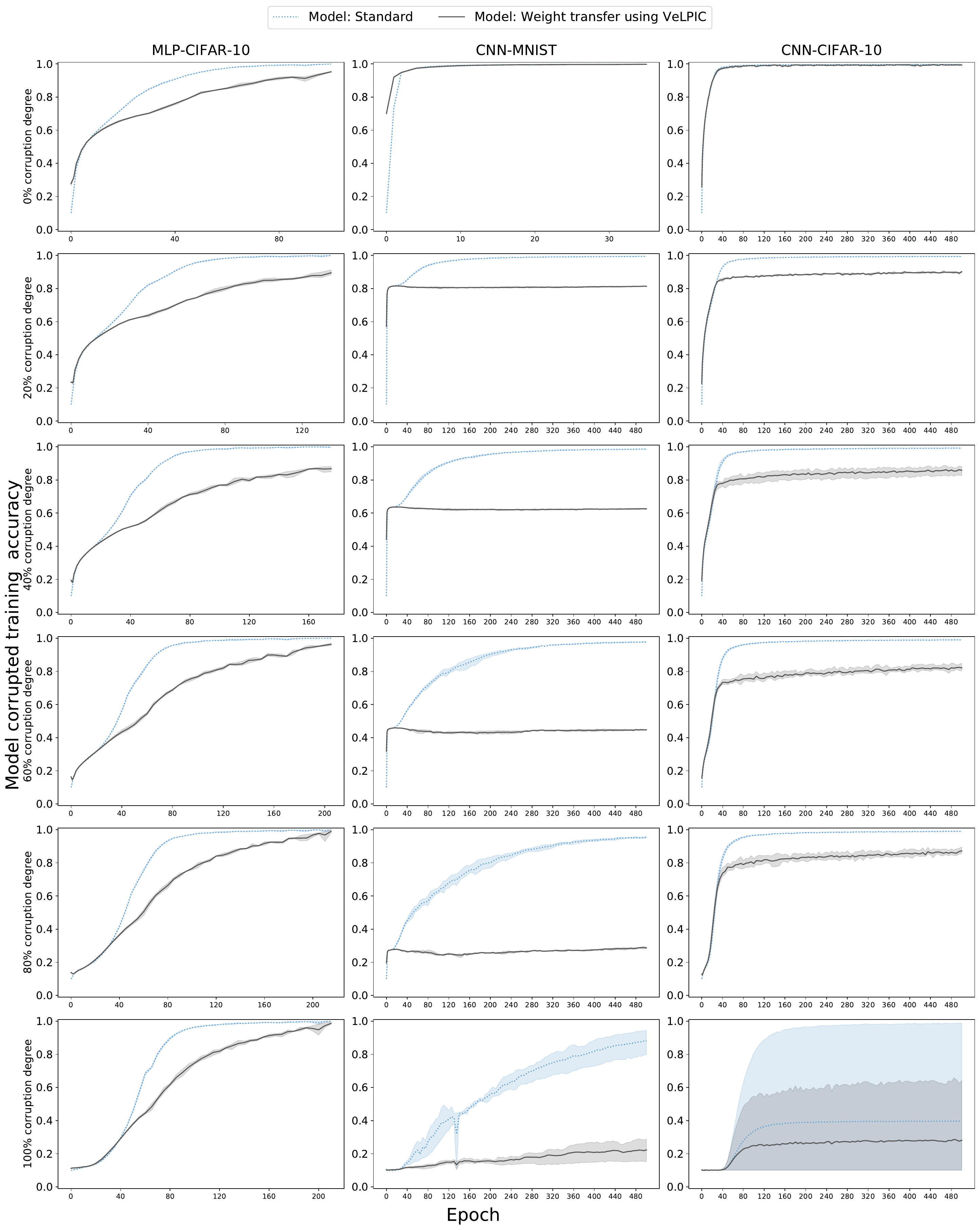}
  \caption{Model train accuracy on corrupted dataset when the VeLPIC weight intervention is applied to pre-softmax weights at the epoch in question during training. The training accuracy on corrupted dataset of the model with standard training without weight intervention (blue dotted line) is overlaid for comparison.}
  \label{supp_train_weight_2}
\end{figure}

\section{Experiment results with weight intervention}

Here, we present results of the latent generalization of VeLPIC evolves across different layers during training for models trained with 0\% corruption degree. The results for intervention performed at the 40th epoch is shown in Figure~\ref{mem_40_mavcaccuracy_0}.

\begin{figure*}[h]
  \centering
  \includegraphics[width=\linewidth]{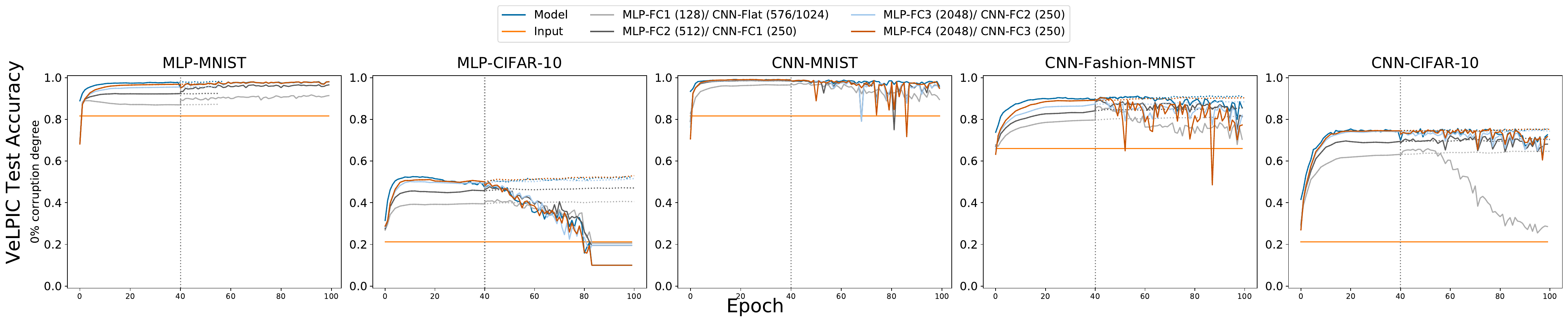}
  \caption{Vector Linear Probe Intermediate layer Classifier (VeLPIC) test accuracy during training on models when intervention is performed at 40th epoch and standard training is performed thereafter for 60 epochs for 0.0 corruption degree. The VeLPIC test accuracy on models trained with standard training (dotted) without intervention is overlaid for comparison. The results correspond to a single run in each case.}
  \label{mem_40_mavcaccuracy_0}
\end{figure*}

% We also present results of weight intervention applied at 10th epoch. The model is training with corrupted data for the first 10 epochs using standard training. The intervention is performed at 10th epoch by replacing the pre-softmax weights with VeLPIC vectors (last layer). Standard training is performed for the next 90 epochs using corrupted training data. Model test accuracy on true labels when intervention is performed at 10th epoch across training epochs for varying corruption degrees are shown in Figure~\ref{mem_10}. 

% \begin{figure*}[h]
%   \centering
%   \includegraphics[width=\linewidth]{images/retrain_10.pdf}
%   \caption{Model test accuracy with true labels during training when intervention is performed at 10th epoch and standard training is performed thereafter for 90 epochs. A model is trained using standard training with corrupted dataset is loaded. Model weights of the pre-softmax layer were replaced with the VeLPIC class vectors and trained for 90 epochs. The model with standard training (dotted) without intervention is overlaid for comparison.}
%   \label{mem_10}
% \end{figure*}

\section{Comparison between randomly initialized and trained model}
\label{random}

We compare the test accuracy of the model, MASC, and VeLPIC for both randomly initialized and trained models across varying corruption degrees. Results for all model–dataset pairs are presented in Figure \ref{compare_random}. For MASC and VeLPIC, we report accuracies corresponding to the best-performing layer, defined as the layer achieving the highest MASC or VeLPIC test accuracy for each model–dataset pair.

At lower corruption degrees, we observe a consistent trend across all model–dataset combinations: MASC and VeLPIC evaluated on trained models outperform their counterparts evaluated on randomly initialized models when measured at the respective best layers.

At higher corruption degrees, however, a small number of exceptions emerge. Specifically, for MASC, performance on randomly initialized models exceeds that on trained models in the following cases: AlexNet-Tiny ImageNet at 60\% corruption, and MLP-MNIST, MLP-CIFAR-10, and AlexNet-Tiny ImageNet at 80\% corruption. In contrast, VeLPIC exhibits a single exception. At 80\% corruption on AlexNet-Tiny ImageNet, VeLPIC evaluated on the randomly initialized model outperforms the trained model.

% reviewer comment : Comparison with the baseline of VeLPIC or MASC applied to representations drawn from a randomly initialized network would also strengthen the argument that it is some generalization property of the representations rather than the strength of the probes. 
% where to cite this in the main paper?

\begin{figure*}[h]
  \centering
  \includegraphics[width=\linewidth]{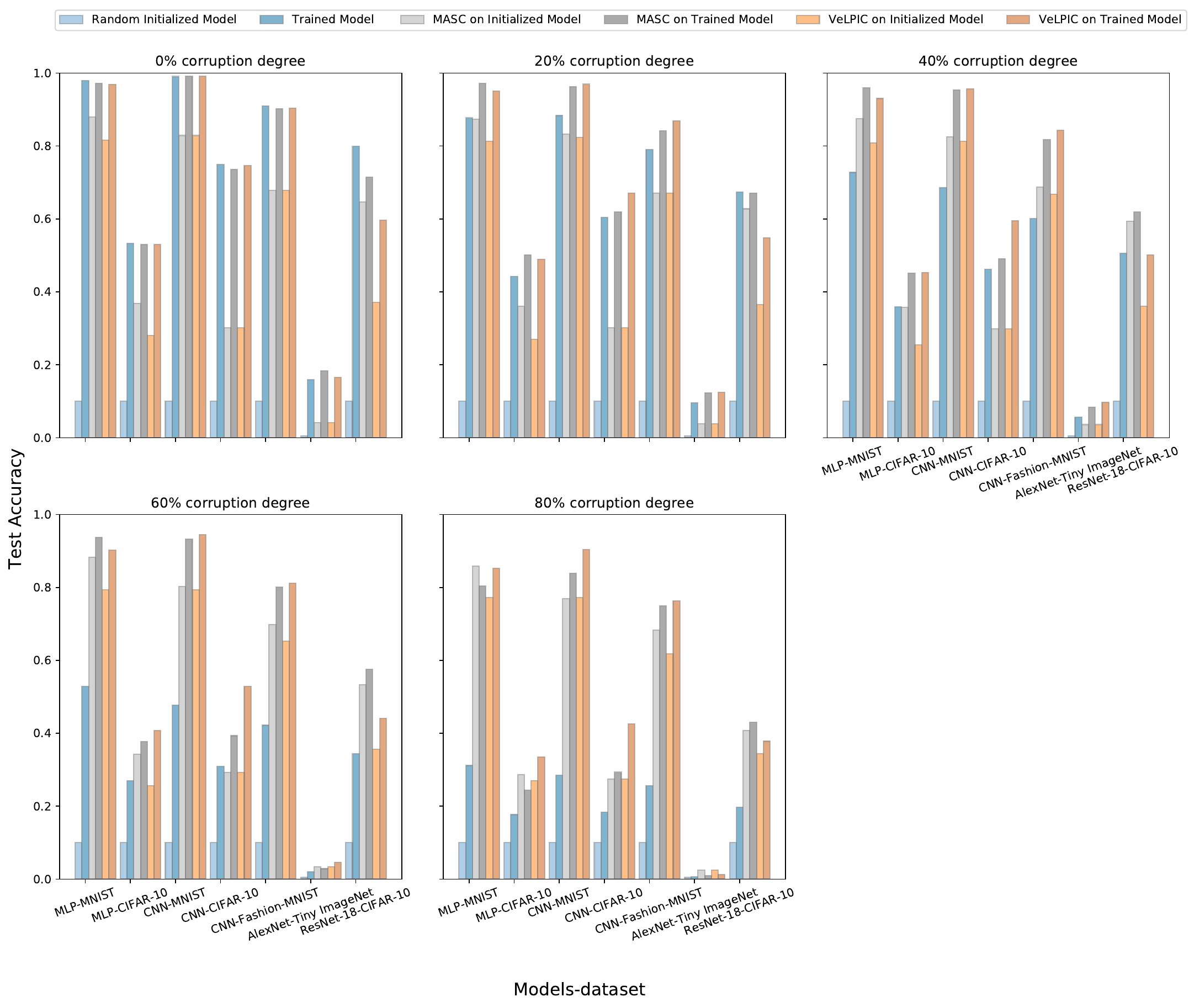}
  \caption{Model, MASC and VeLPIC test accuracy for randomly initialized and trained models across different corruption degrees and model–dataset pairs. For MASC and VeLPIC, results are reported for the best performing layer, defined as the layer achieving the highest MASC/VeLPIC test accuracy for the corresponding model–dataset pair. The test accuracies are averaged over three runs.
  }
  \label{compare_random}
\end{figure*}

\section{Impact of Dropout as a Regularizer}
\label{dropout}

We conducted some preliminary experiments to study the effect of dropout \citep{srivastava2014dropout} as a regularization technique on latent generalization during memorization. Both the CNN and ResNet-18 models were slightly modified to incorporate dropout layers. For dropout version of CNN models, a dropout layer (p=0.2) was used after every fully connected layer. The CNN  models were trained on MNIST, Fashion-MNIST, and CIFAR-10. For dropout version of ResNet-18, dropout layer (p=0.2) was added before the final classification layer.

MASC test accuracy on models trained with and without dropout layers are shown in Figure~\ref{dropout_masc_2_8} and \ref{dropout_masc_0}. For models trained with dropout (WD), we plot the results after the dropout layer. For models trained with dropout, the MASC dynamics remain largely similar to those of models trained without dropout. However, in the later layers of CNN-WD-Fashion-MNIST and CNN-WD-MNIST, we observe a noticeable deviation, where the layer-wise behavior no longer aligns with the model's test accuracy in the early epochs of training; interestingly the corresponding VeLPIC plots in Figure~\ref{dropout_velpic_2_8} do not show this deviation. In most cases, MASC performance shows an initial decline followed by a rise toward a peak. In contrast, for ResNet-18 models, no significant differences are observed between the dropout and non-dropout models.

VeLPIC test accuracy on models trained with and without dropout layers are shown in Figure~\ref{dropout_velpic_2_8} and \ref{dropout_velpic_0}.
There is no significant difference in the dynamics of VeLPIC performance between models trained with dropout and those trained without dropout.

\begin{figure*}[h]
  \centering
  \includegraphics[width=\linewidth]{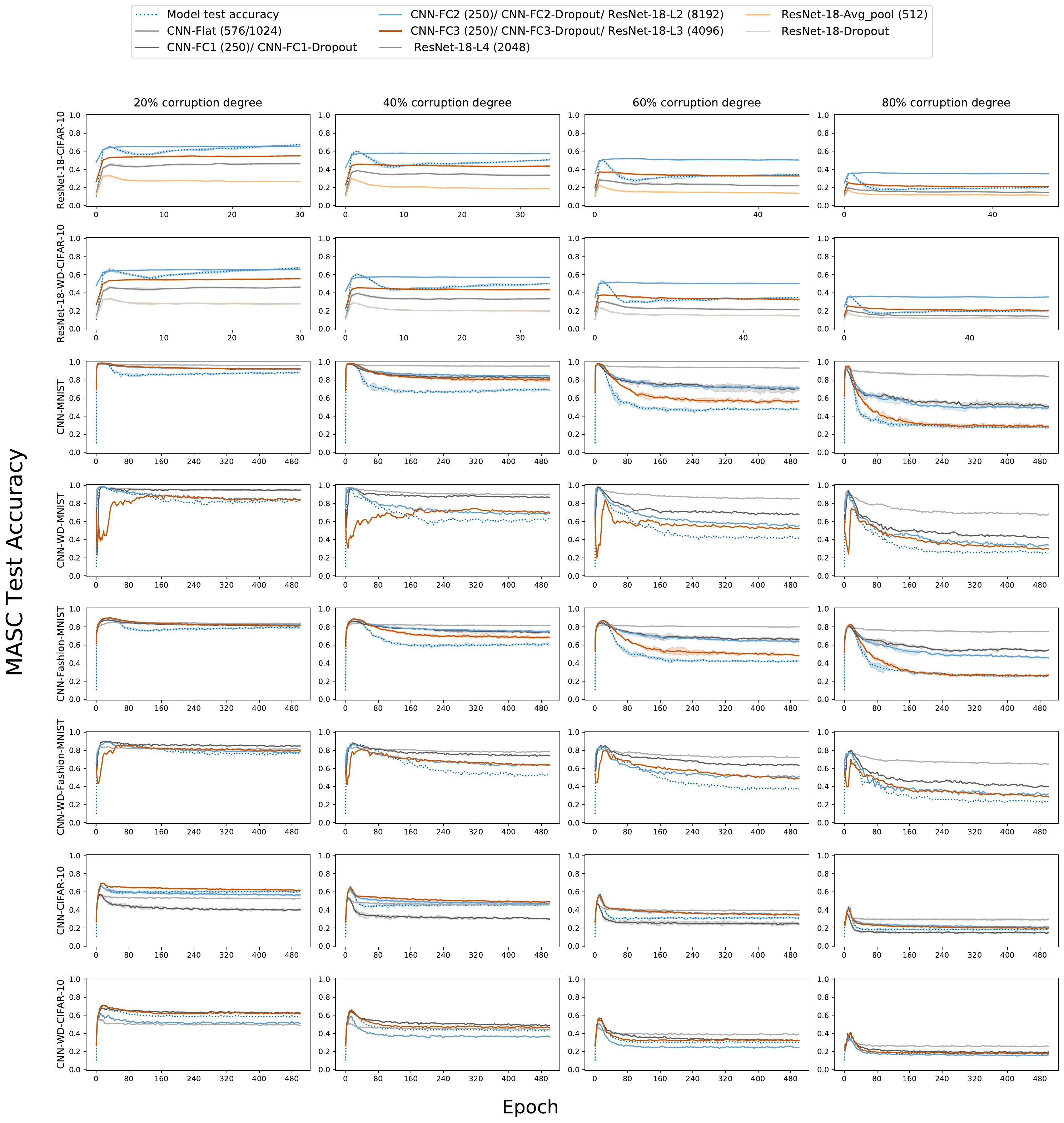}
  \caption{Minimum Angle Subspace Classifier (MASC) test accuracy over epochs of training for ResNet-18 and CNN models having trained with and with dropout, where test data is projected onto class-specific subspaces constructed at each epoch from corrupted training data with the indicated label corruption degree. The plots display MASC accuracy across different layers of the network. For reference, the evolution of test accuracy of the corresponding model (blue dotted line) over epochs of training is also shown. WD corresponding to models trained with dropout.}
  \label{dropout_masc_2_8}
\end{figure*}

\begin{figure*}[h]
  \centering
  \includegraphics[width=0.99\linewidth]{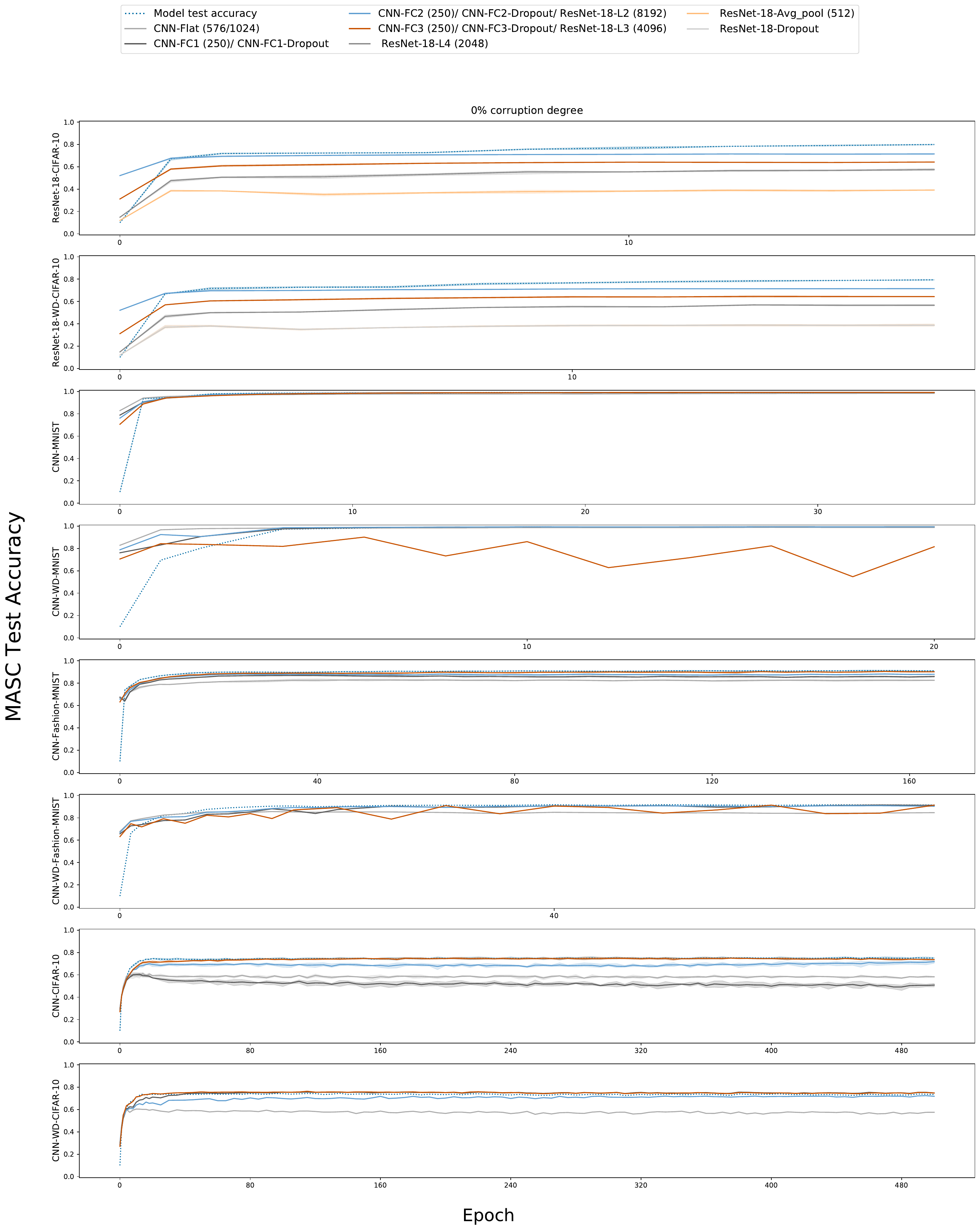}
  \caption{Minimum Angle Subspace Classifier (MASC) test accuracy for 0\% corruption degrees over epochs of training for ResNet-18 and CNN models having with and with dropout, where test data is projected onto class-specific subspaces constructed at each epoch from corrupted training data. The plots display MASC accuracy across different layers of the network. For reference, the evolution of test accuracy of the corresponding model (blue dotted line) over epochs of training is also shown. WD corresponding to models trained with dropout.
 }
  \label{dropout_masc_0}
\end{figure*}

\begin{figure*}[h]
  \centering
  \includegraphics[width=\linewidth]{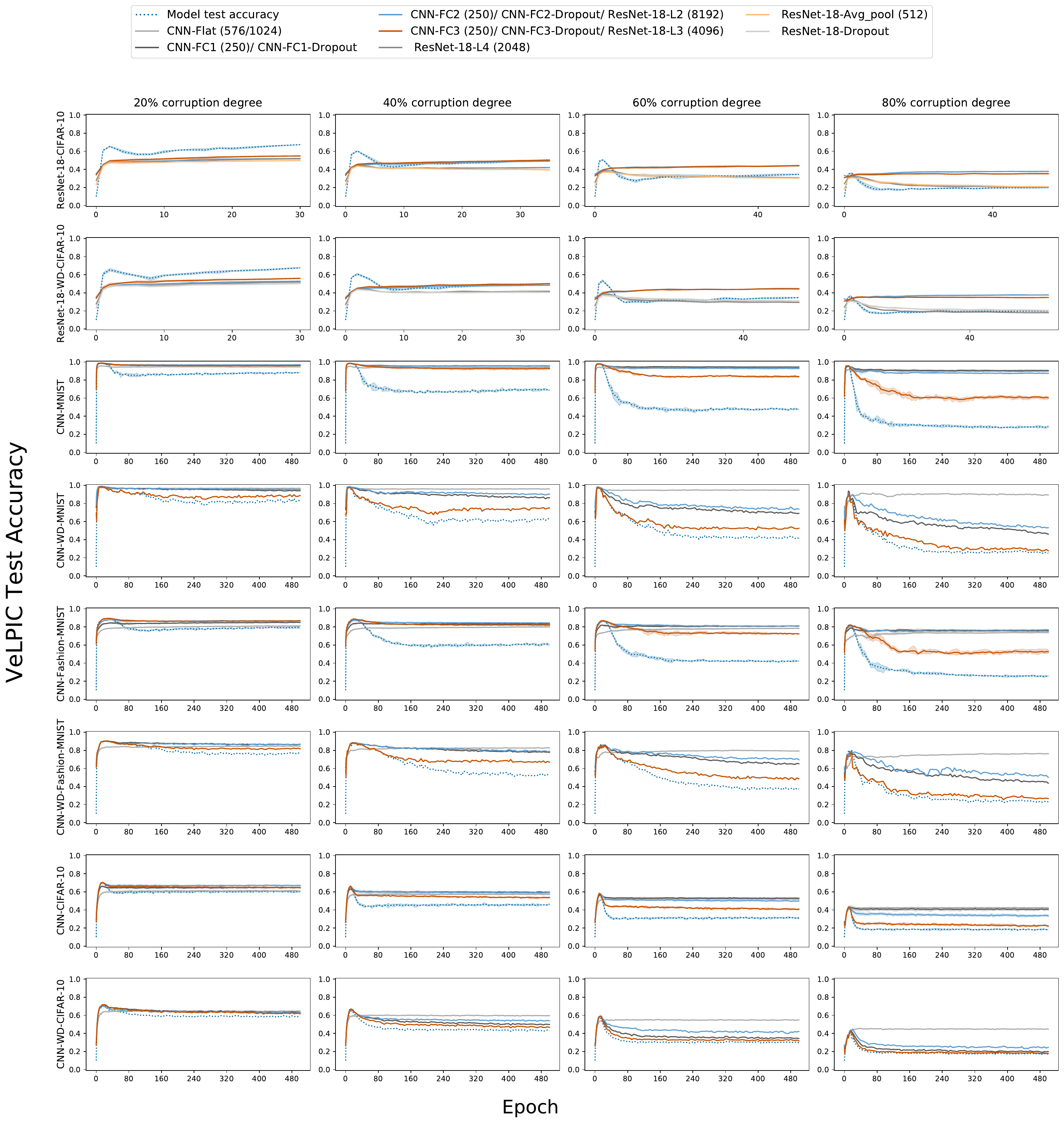}
  \caption{Vector Linear Probe Intermediate-layer Classifier (VeLPIC) test accuracy during training of ResNet-18 and CNN models with and with dropout, where test data is projected onto class vectors constructed at each epoch from training data with the indicated label corruption degrees. The plots display VeLPIC accuracy across different layers of the network for various model–dataset combinations. For reference, the test accuracy of the models (blue dotted line) over epochs of training is also shown. WD corresponding to models trained with dropout.}
  \label{dropout_velpic_2_8}
\end{figure*}

\begin{figure*}[h]
  \centering
  \includegraphics[width=0.99\linewidth]{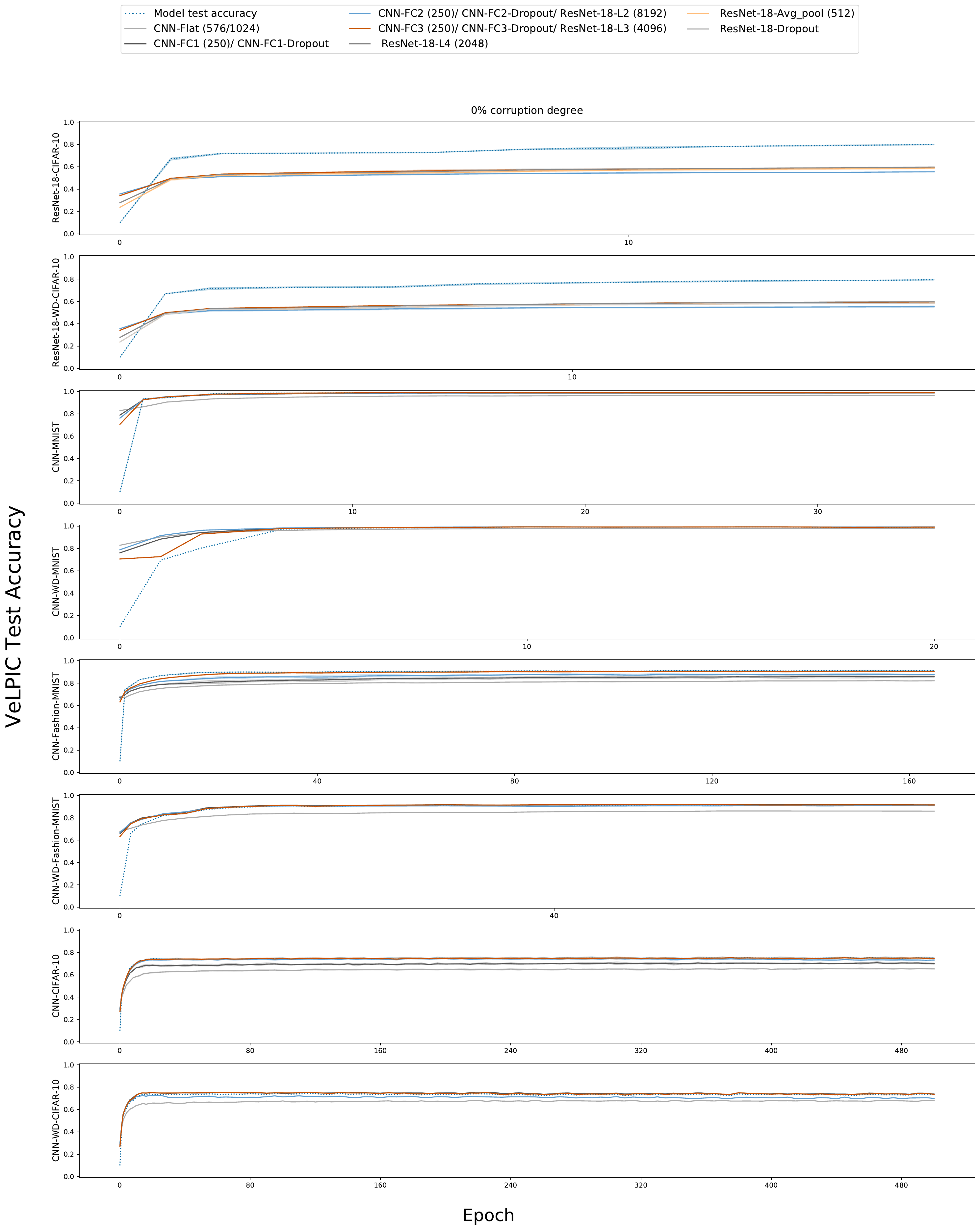}
  \caption{Vector Linear Probe Intermediate-layer Classifier (VeLPIC) test accuracy for 0\% corruption degrees during training of ResNet-18 and CNN models with and with dropout, where test data is projected onto class vectors constructed at each epoch from training data. The plots display VeLPIC accuracy across different layers of the network for various model–dataset combinations. For reference, the test accuracy of the models (blue dotted line) over epochs of training is also shown. WD corresponding to models trained with dropout. }
  \label{dropout_velpic_0}
\end{figure*}

\end{document}